
\documentclass{article}

\usepackage{microtype}
\usepackage{graphicx}

\usepackage{subcaption}
\usepackage{booktabs} 
\usepackage{enumitem}
\usepackage{hyperref}


\usepackage[noend]{algorithmic}

\usepackage[preprint]{icml2026}


\usepackage{amsmath}
\usepackage{amssymb}
\usepackage{mathtools}
\usepackage{amsthm}

\usepackage[capitalize,noabbrev]{cleveref}

\theoremstyle{plain}
\newtheorem{theorem}{Theorem}[section]
\newtheorem{proposition}[theorem]{Proposition}
\newtheorem{lemma}[theorem]{Lemma}

\theoremstyle{definition}
\newtheorem{definition}[theorem]{Definition}

\theoremstyle{remark}
\newtheorem{remark}[theorem]{Remark}

\usepackage[textsize=tiny]{todonotes}

\icmltitlerunning{Unsupervised Decomposition and Recombination with Discriminator-Driven Diffusion Models}

\begin{document}

\twocolumn[
\icmltitle{Unsupervised Decomposition and Recombination with \\ Discriminator-Driven Diffusion Models}




\begin{icmlauthorlist}
\icmlauthor{Archer Wang}{rle,iaifi}
\icmlauthor{Emile Anand}{gt}
\icmlauthor{Yilun Du}{harvard,gdm}
\icmlauthor{Marin Soljacic}{rle,mit}

\end{icmlauthorlist}

\icmlaffiliation{rle}{Research Laboratory of Electronics, MIT, Cambridge, MA 02139, USA}

\icmlaffiliation{mit}{Department of Physics, Massachusetts Institute of Technology, MIT, Cambridge, MA, USA}

\icmlaffiliation{iaifi}{NSF AI Institute for Artificial Intelligence and Fundamental Interactions, Cambridge, MA 02139, USA}

\icmlaffiliation{harvard}{Kempner Institute, Harvard University, Cambridge, MA, USA}

\icmlaffiliation{gdm}{Google DeepMind, Mountain View, CA, USA}

\icmlaffiliation{gt}{School of Computer Science, Georgia Institute of Technology, Atlanta, USA}

\icmlcorrespondingauthor{Archer Wang}{archerdw.@mit.edu}


\icmlkeywords{Compositional Generation, Recombined Latents, Diffusion Models, Discriminators, Static Images, Robotic Videos, Machine Learning, ICML}

\vskip 0.3in
]



\printAffiliationsAndNotice{}  

\begin{abstract}
Decomposing complex data into factorized representations can reveal reusable components and enable synthesizing new samples via component recombination. We investigate this in the context of diffusion-based models that learn factorized latent spaces without factor-level supervision. In images, factors can capture background, illumination, and object attributes; in robotic videos, they can capture reusable motion components. To improve both latent factor discovery and quality of compositional generation, we introduce an adversarial training signal via a discriminator trained to distinguish between single-source samples and those generated by recombining factors across sources. By optimizing the generator to fool this discriminator, we encourage physical and semantic consistency in the resulting recombinations. Our method outperforms implementations of prior baselines on CelebA-HQ, Virtual KITTI, CLEVR, and Falcor3D, achieving lower FID scores and better disentanglement as measured by MIG and MCC. Furthermore, we demonstrate a novel application to robotic video trajectories: by recombining learned action components, we generate diverse sequences that significantly increase state-space coverage for exploration on the LIBERO benchmark.\looseness=-1


\end{abstract}

\section{Introduction}
\label{introduction}

\textit{Representation learning} and \textit{generative modeling} form two complementary pillars of machine learning. On one side, representation learning seeks to encode complex data into lower-dimensional latent spaces that capture the essential factors of variation; on the other, generative modeling maps latent codes back into the data space to synthesize new examples. Their intersection has given rise to powerful frameworks such as variational auto-encoders, which simultaneously learn a latent representation $z$ and a conditional likelihood $p(x|z)$ \cite{vae}. VAEs have spurred significant work on disentanglement, the goal of recovering latent factors that align with meaningful, independent attributes of the data \cite{Huang_2018_ECCV,Kim2018DisentanglingBF, infogan}. However, fully unsupervised disentanglement is provably impossible without additional assumptions \cite{Locatello2018ChallengingCA}, and numerous efforts have explored how to incorporate suitable inductive biases.
Beyond classic VAEs, recent research highlights the importance of \emph{compositional} structure in data generation. In many domains, particularly vision, observed data arise from combinations of discrete or continuous factors (objects, attributes, positions, etc.) \cite{misino2022vael,cosmos}. For instance, in a dataset of boats drifting along a river with trees in the background, a compositional model can discover components that capture the boat, the water/flow, and the scenery without explicit supervision; using these components, it can recombine factors across inputs to synthesize new realistic scenes of the same type. We focus on datasets where a finite set of factors explains most variation, such as faces with different hairstyles or expressions, or simple scenes of colored blocks. 

Exploiting such compositionality can yield stronger generalization, as models can reconfigure learned factors into novel configurations outside the training set via recombining components from different data points \cite{NEURIPS2022_5e6cec2a}. Probing the compositional structure has been of increasing interest to fields such as causal inference and ICA \cite{Seigal2022LinearCD, bansal2024universal}. Overall, learning better representations that reflect the generation process can be useful for downstream tasks such as classification, regression, visualization, and for building better generative models \cite{causalrepresentation}. \looseness=-1
Latent recombination is both useful and revealing: coherent recombinations suggest modular factors, while artifacts indicate entanglement. Ideally, such recombinations would be evaluated by external feedback (human preference, task success, or physical execution) which would directly refine the representation.

In summary, we make the following contributions:
First, we introduce an adversarial training signal for factorized latent diffusion that improves both factor discovery and compositional generation. 
Second, we demonstrate consistent gains over prior baselines on multiple image datasets, with improved recombination quality and stronger disentanglement.
Finally, we apply the approach to a robotics application: recombining learned action components yields diverse video trajectories that substantially increase state-space coverage for exploration on the LIBERO benchmark.




\section{Related Work}
\textbf{Learning compositional visual representations for images with diffusion models.} Diffusion-based models have emerged as the state-of-the-art generative models for images \cite{ddpm}, and recent works have investigated using this class of generative models to learn representations \cite{ddim,10.1007/978-3-031-19790-1_26,ftl-igm}. Previous works have also explored their use in compositional generation, such as combining visual concepts \cite{liu2021learning,du2023reduce,pmlr-v162-janner22a,10.5555/3495724.3496328} and robotic skills \cite{ajay2023is,10.5555/3692070.3694630}. The most relevant work is Decomp Diffusion \citep{decompdiffusion}, which is an unsupervised method that, when given an image, infers a set of different components in the image, each represented by a diffusion model. We build on Decomp Diffusion as a strong baseline for learning compositional components with diffusion models, and study how additional feedback during training can improve recombination quality and representation structure.\looseness=-1


Recent work has shown that using video generation to guide robotic control policy achieves great success \cite{unipi,avdc}. We conduct an additional experiment which focuses on videos of task demonstrations by robotics, and we study how good representations lead to generated recombined videos with better physical realism and also lead to the agent exploring its environment much more effectively with a pre-trained policy \cite{10.1007/978-3-030-86523-8_5,luo2024grounding,chi2023diffusionpolicy,zhang2024largescalereinforcementlearningdiffusion}.\looseness=-1


\textbf{Disentanglement and feedback signals.}
Disentanglement aims to represent data in terms of latent factors that can be independently manipulated, so that recombining components yields novel yet coherent samples \cite{pmlr-v80-denton18a,Brooks_2023_CVPR}. This property is closely tied to compositional generalization: if each latent encodes a distinct generative mechanism, then swapping one component should behave like an intervention on that mechanism while keeping the rest fixed. Traditional unsupervised disentanglement methods encourage factorial latents via regularization (e.g., $\beta$-VAE; FactorVAE); however, fully unsupervised disentanglement is ill-posed without additional assumptions: many different latent parameterizations can explain the same observational distribution \citep{betavae, Kim2018DisentanglingBF, Locatello2018ChallengingCA,black2024zeroshot}. In practice, identifying the ``right'' compositional structure often requires external feedback that reveals whether a recombination corresponds to a plausible intervention. Complementary lines of work study identifiability through structure and interventions, showing that interventional data can recover meaningful factors under suitable conditions \citep{Seigal2022LinearCD,causalrepresentation}. Feedback can be incorporated by fine-tuning diffusion models with reward gradients \cite{wang2025finetuning,zekri2025finetuning}.
ReFL \cite{xu2023imagereward} evaluates a reward on the one-step clean prediction $\hat{x}_0$ at a randomly chosen denoising step $t$ (rather than only on the final sample), while DRaFT \cite{draftk} backpropagates reward gradients through the sampling procedure.
Our feedback mechanism is inspired by the ReFL-style use of intermediate $\hat{x}_0$ predictions.\looseness=-1

\section{Background}

\begin{figure*}[ht]
    \centering
    \includegraphics[width=0.8\textwidth]{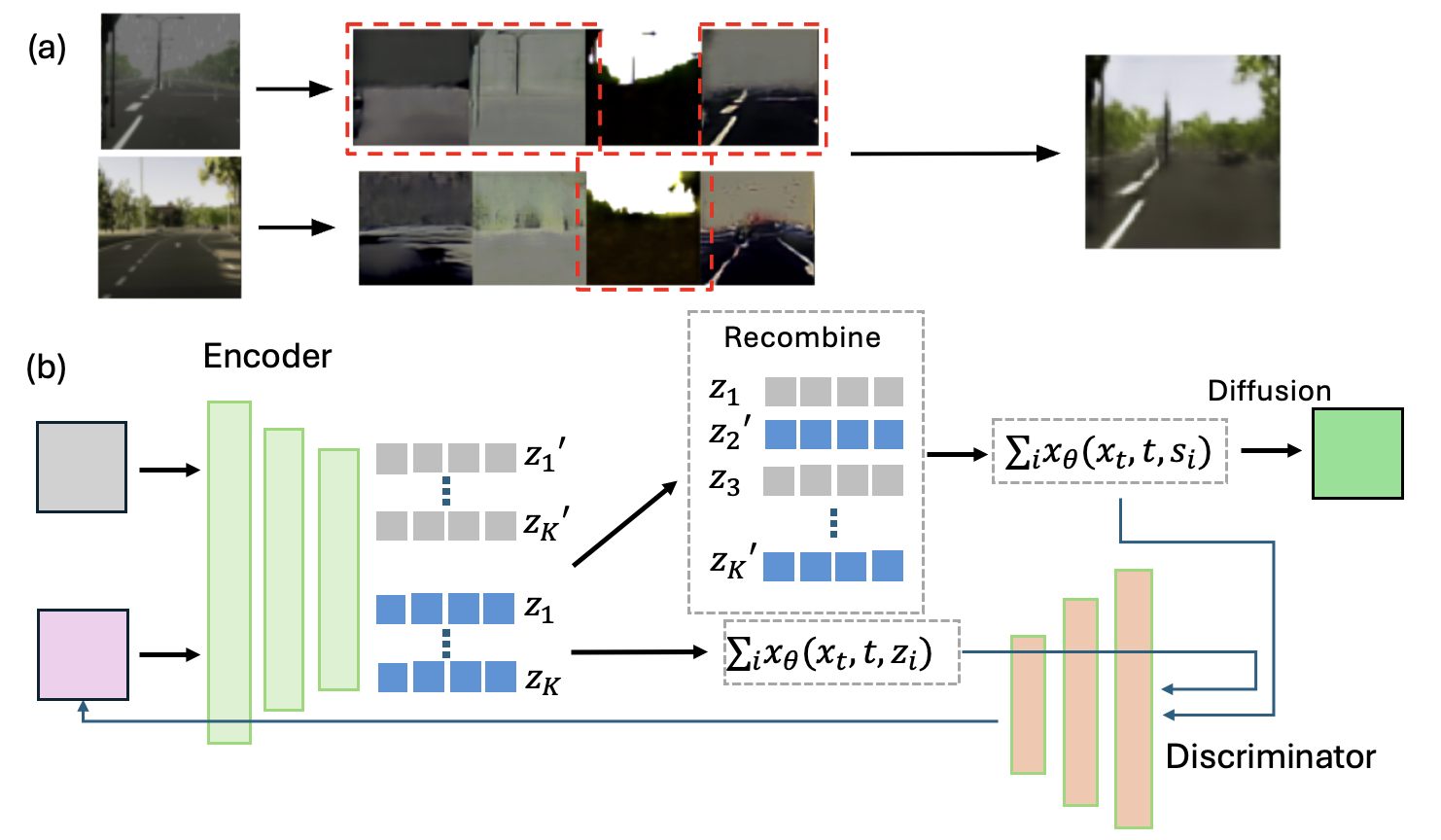} 
    \caption{
    (a) We encode two input images into $K=4 $ latent components. The middle panels show component-wise reconstructions: for each $k$, we decode using only $z_k$, with all other components set to zero. Red dashed boxes mark which components are chosen from each source to form a hybrid latent code $\tilde{z}$. Decoding $\tilde{z}$ yields the final image (right), which merges appearance/scene attributes from both sources. (b) During training, the discriminator learns to distinguish between predictions from latents from a single input and latents from multiple sources, while the model attempts to fool the discriminator. At inference, recombined latents are sampled via the standard diffusion denoising process.}
    
    \label{fig:discriminator_diagram}
\end{figure*}

Compositional modeling can be learned through
$$p_{\theta}(x)\propto \prod_i p_{\theta}^i(x).$$
Given a narrow slice of training, this compositional formulation allows us to generalize to unseen combinations of variables \citep{composition_position}, and this factorization can be traced back to the field of probabilistic graphical modeling \citep{koller2009}.


Given an input image or video of a scene, we model it by encoding it into several low-dimensional latent components, and we encourage each component to capture a different concept so that, together, they can be used to generate or score the input. \citet{du2019implicit} showed how sampling through the joint distribution on all latents is represented by generation on an
energy-based model (EBM) that is the sum of each conditional EBM, but this required supervision, as labels were used to represent concepts. \looseness=-1

COMET~\cite{du2021unsupervised} learns an energy-based model framework to decompose and discover these concepts in an unsupervised manner. 
Given an input image $x_i$, an encoder produces $K$ latent factors $\{z_k\}_{k=1}^K$, each
parameterizing an energy function $E_\theta(x; z_k)$.
COMET is trained so that $x_i$ corresponds to the minimum-energy configuration
under the sum of these energies:
\[
\mathcal{L}_{\mathrm{COMET}}(\theta)
\;=\;
\big\|
\arg\min_x \sum_{k} E_\theta(x; z_k)
\;-\; x_i
\big\|^2.
\]
In practice, the minimizer is approximated by a finite number of gradient descent steps on $x$,
making composition equivalent to joint energy minimization over multiple inferred factors. This formulation enables flexible recombination by mixing latent factors across images, but
requires iterative optimization at generation time. 

Recent diffusion-based approaches can be
viewed as replacing explicit energy minimization with denoising dynamics that implicitly follow
additive energy gradients, yielding a more stable and scalable mechanism for compositional
generation \cite{wu2023slotdiffusion,kirilenko2024objectcentric, zhu2024learning}. Decomp Diffusion links EBM composition to diffusion by decomposing the denoiser’s prediction into a mean over factor-specific contributions. Concretely, it mirrors the EBM decomposition $E_{\theta}(x)=\sum_i E_{\theta}(x; z_i)$ with $\epsilon_{\theta}(x_t,t)=\frac{1}{K}\sum_{k=1}^{K}\epsilon_{\theta}(x_t,t,z_k)$, where $z_k$ are learned latent factors, $K$ is the number of factors, and $\epsilon_{\theta}$ denotes the denoising network. In their best performing model, the authors demonstrate a noticeable improvement in disentanglement compared to $\beta$-VAE \cite{pmlr-v108-khemakhem20a,betavae,burgess2018understandingdisentanglingbetavae}, InfoGAN \cite{infogan}, GENESIS-V2 \cite{genesisv2}, and COMET \cite{du2021unsupervised}. \looseness=-1

\section{Compositional Generation with Discriminator}
Without explicit interventions during training, unsupervised generative models provide no control over the structure of their learned representations or over the behavior of latent recombination. \citet{additive_decoders} argues that for additive models, recombined samples may not always lie on the manifold of reasonable observations, and a theoretical characterization of these models is also done in \citet{wiedemer}, for a supervised setting.
Our setting follows this idealized perspective, but replaces expensive environment feedback with a scalable surrogate. We use a convolutional discriminator, whose locality and translation-equivariance make it efficient at detecting the low-level inconsistencies that often arise under recombination, thereby providing a pragmatic proxy for perceptual plausibility.\looseness=-1

We train the discriminator to distinguish single-source generations from recombined generations, and use its gradients to refine the encoder/generator so that recombinations become indistinguishable from valid samples. While this does not provide identifiability guarantees in the sense of causal disentanglement, it operationalizes the same intuition: recombination reveals whether factors are independently manipulable, and discriminator feedback supplies a practical signal for improving compositional structure.
A key challenge in compositional generative modeling is balancing compositional generalization with the plausibility of newly composed samples \cite{bagautdinov2018modeling}. Naively recombining latent factors often yields out-of-distribution generations, whereas overly strong constraints can collapse diversity. Our method introduces a tunable discriminator-driven feedback signal that selectively penalizes implausible recompositions while preserving compositional diversity.\looseness=-1

\subsection{Factorized Generative Model} Let $\mathcal{X}$ denote the data space (e.g. images or videos), and $\mathcal{M} \subset \mathcal{X}$ the data manifold corresponding to the support of the true distribution $p_{\text{data}}$. 
We assume each observation $x \in \mathcal{X}$ is generated by a factorized latent representation $z = (z_1, \ldots, z_K)$, where each $z_i \in \mathcal{Z}_i$ corresponds to a distinct generative factor (e.g. object shape, position, color). 
The generative model is described by
\begin{equation}
    x \sim p_\theta(\cdot\mid z), \quad z \sim p(z) = \prod_{i=1}^K p(z_i),
\end{equation}
and induces a model distribution $p_\theta(x)$ over $\mathcal{X}$. During training, we learn both an encoder $\text{Enc}_\phi : \mathcal{X} \to \mathcal{Z}_1 \times \cdots \times \mathcal{Z}_K$ and a conditional distribution $p_\theta(\cdot | z): \mathcal{Z} \to \Delta(\mathcal{X})$ by minimizing a reconstruction-based diffusion loss $\mathcal{L}_{\text{MSE}}(\theta, \phi)$.\looseness=-1

\subsection{Compositional Recombination} At inference time, we construct \emph{compositional} latent codes by recombining factors from multiple inputs:
\begin{equation}
    \tilde{z} =S \odot z^{(A)} + (1 - S) \odot z^{(B)}, \quad S \in \{0,1\}^K,
\end{equation}
where $\odot$ denotes element-wise multiplication and $z^{(A)}, z^{(B)}$ are latent representations of two source samples and $S$ is a binary mask indicating which factors to swap. 
The resulting generation $\tilde{x} \sim p_\theta(\cdot | \tilde{z})$ may correspond to a combination of factors that was never observed jointly in $p_{\text{data}}$, and thus $\tilde{x}$ may lie off the manifold $\mathcal{M}$.\looseness=-1

\subsection{Discriminator Feedback}
To discourage unrealistic recombinations, we introduce a recombination discriminator
$D_\psi : \mathcal{X} \to [0,1]$ trained to distinguish single-source generations from
recombined generations. Ideally, single-source samples are obtained by decoding latents inferred from a single data point and recombined
samples are constructed by mixing latent factors across multiple sources and decoding them. \looseness=-1

Formally, given two data points $x^A$ and $x^B$ and timestep $t$, let 
$\hat{x}^{\text{single}} = G_{\theta}(x^A_t, t,z_{1:K})$
and
$\hat{x}^{\text{recomb}}= G_{\theta}(x^A_t, t,\tilde{z}_{1:K})$, where denoising predictions come from contributions associated with each latent, we have $G_{\theta}(x,t,z_{1:K})=\frac{1}{K}\sum_{j=1}^K G_{\theta}(x,t,z_j)$.
We introduce a discriminator $D_\psi$ that distinguishes single-source generations
from recombined ones.
The discriminator is trained by minimizing the classification loss\looseness=-1
\begin{align*}
\mathcal{L}_{\mathrm{clf}}(\psi)
=
-\mathbb{E}\!\left[
\log D_\psi(\hat{x}^{\text{single}})
+
\log\!\left(1 - D_\psi(\hat{x}^{\text{recomb}})\right)
\right].
\end{align*}

The generator and encoder are trained adversarially to make recombined samples
indistinguishable from single-source samples by minimizing
\begin{align*}
\mathcal{L}_{\mathrm{adv}}(\theta,\phi;\psi)
=
-\mathbb{E}\!\left[\log D_\psi(\hat{x}^{\text{recomb}})\right].
\end{align*}
We train the model using alternating minimization.
The generator and encoder are optimized by minimizing
\begin{align*}
\mathcal{L}_{\mathrm{gen}}(\theta,\phi)
=
\mathcal{L}_{\mathrm{MSE}}(\theta,\phi)
+
\lambda
\mathcal{L}_{\mathrm{adv}}(\theta,\phi;\psi),
\end{align*}
while the discriminator is trained to minimize $\mathcal{L}_{\mathrm{clf}}(\psi)$. Here $\mathcal{L}_{\mathrm{MSE}}$ denotes the standard diffusion reconstruction loss
used to train the base model.\looseness=-1
 
\begin{algorithm}[btp]
\caption{Compositional image training with single-source vs recombined discriminator}
\label{alg:image-train}
\begin{algorithmic}[1]
\STATE \textbf{Input:} dataset $\mathcal{D}$, encoder $E_{\phi}$, generator/decoder $G_{\theta}$, discriminator $D_{\psi}$, weight $\lambda$
\WHILE{not converged}
  \STATE Sample two datapoints $x^{A}, x^{B} \sim \mathcal{D}$ and a recombination mask $S \in \{0,1\}^{K}$
  \STATE Encode latents: $z^{A}_{1:K} \leftarrow E_{\phi}(x^{A}),\;\; z^{B}_{1:K} \leftarrow E_{\phi}(x^{B})$
  \STATE Recombine latents: $\tilde{z}_{k} \leftarrow S_k z^{A}_k + (1-S_k)z^{B}_k$ for $k=1,\dots,K$
  \STATE Sample $t \sim \mathrm{Unif}(\{1,\dots,T\})$, $\epsilon\sim\mathcal{N}(0,I)$
  \STATE Forward diffusion: $x_t^{A} \leftarrow \sqrt{\bar{\alpha}_t}\,x^{A} + \sqrt{1-\bar{\alpha}_t}\,\epsilon$
  \STATE \textbf{Single-source generation:} $\hat{x}^{\text{single}} \!\leftarrow\!G_{\theta}(x^A_t,t,z^{A}_{1:K})$
  \STATE \textbf{Recombined generation:} $\hat{x}^{\text{recomb}} \!\leftarrow\!G_{\theta}(x^A_t,t,\tilde{z}_{1:K})$
  \STATE Compute diffusion reconstruction loss $\mathcal{L}_{\text{rec}} = \|x^{A} - \hat{x}^{\,\text{single}}\|_2^2$
  \STATE \textbf{Discriminator update (single-source/recombined):}
  \STATE 
$\mathcal{L}(\psi)
\!=\! -\mathbb{E}\!\left[
\log D_\psi(\hat{x}^{\text{single}})
\!+\! \log\!\left(1 - D_\psi(\hat{x}^{\text{recomb}})\right)
\right]$
  \STATE \hspace{1em} $\psi \leftarrow \psi - \eta_{\psi}\nabla_{\psi}\mathcal{L}$
  \STATE \textbf{Generator/encoder update:}
  \STATE \hspace{1em} $\mathcal{L}_{\text{adv}}(\theta,\phi) \;=\; -\mathbb{E}\!\left[\log D_{\psi}(\hat{x}^{\,\text{recomb}})\right]$
  \STATE \hspace{1em} $\theta,\phi \leftarrow \theta,\phi - \eta\nabla_{\theta,\phi}\left(\mathcal{L}_{\text{rec}} + \lambda\mathcal{L}_{\text{adv}}\right)$
\ENDWHILE
\end{algorithmic}
\end{algorithm}

\section{Experiments}
In this section, we evaluate the effectiveness of our proposed approach across multiple settings\footnote{
We provide source code for our experiments
\href{https://anonymous.4open.science/r/gan_project-5ED7}{here}
and
\href{https://anonymous.4open.science/r/video_2024-A49B}{here}.
}.
\subsection{Synthetic Recombination Experiment: Isolating Discriminator Feedback}
\label{sec:synthetic}

We first study a synthetic setting designed to isolate the effect of
discriminator feedback on latent recombination, independent of diffusion training
or high-capacity generators. The goal of this experiment is to verify our method's core mechanism: that
penalizing recombined samples provides a signal that reshapes representations
toward a recombination-friendly factorization.\looseness=-1

\textbf{Setup.}
We construct data from three independent latent factors
$s = (s_1, s_2, s_3)$, for $s_i \in \mathbb{R}^2$ drawn from distinct
non-Gaussian distributions. Observations $x \in \mathbb{R}^4$ are generated by a
fixed decoder with explicit cross-factor interactions; concretely, writing
$s_i=(s_{i,x}, s_{i,y})$, we define
$x_0=\langle s_1,s_2\rangle$, $x_1=\langle s_2,s_3\rangle$,
$x_2=\|s_1\|_2^2-\|s_3\|_2^2$, and $x_3=s_{1,x}s_{3,y}-s_{1,y}s_{3,x}$.
To simulate entangled representations, we apply a random linear mixing $z = Ms$.\looseness=-1

We introduce a discriminator trained to distinguish single-source samples from
recombined samples. A linear reparameterization $W$ of the latent space is then
trained adversarially to fool the discriminator on recombined samples, while
the decoder remains fixed. This setting corresponds to an idealized regime where the reconstruction objective is trivially satisfied, allowing us to
isolate the influence of the adversarial recombination loss.\looseness=-1

\textbf{Results.} As shown in Fig.~\ref{fig:synthetic_scatter}, naive recombination in the entangled coordinates produces samples that deviate substantially from the data manifold,
while discriminator feedback restores the geometry of real data. Quantitatively, recombined samples exhibit a large increase in off-manifold
distance relative to real data, which collapses back to near-real levels after discriminator training (Table~\ref{tab:synthetic_metrics}). Importantly, the learned reparameterization need not recover the true generative factors; instead, it reduces statistical dependence between latent components, enabling approximate closure under coordinate-wise recombination.
This toy experiment demonstrates that feedback from recombined samples alone can induce a representation that supports compositional recombination, even without semantic identifiability or diffusion training. We provide intuition for this behavior with our theoretical analysis (Appendix \ref{appendix:recomb}), which shows that recombination closure implies a Cartesian-product structure in latent space and that penalizing recombined samples contracts cross-component dependence.\looseness=-1

While this synthetic setting does not admit semantic identifiability, we can
quantify improved factorization using standard disentanglement-style metrics,
leveraging access to ground-truth factors.
Specifically, we report the mean correlation coefficient (MCC), which is widely used in the nonlinear ICA literature \cite{mcc}; the mutual
information gap (MIG, computed via binned mutual information) \cite{mig}, as measures of disentanglement with respect to the ground-truth factors. Separately, we report Gaussian total correlation (TC; a log-det covariance proxy) as a measure of statistical dependence among coordinates. To assess block-level independence, we also report the mean absolute off-diagonal correlation of per-block $\ell_2$ norms. We compute these metrics on the raw entangled latents $z$ before training and on
the discriminator-induced reparameterization $\hat{s}=Wz$ after training.\looseness=-1

\begin{figure}[h!]
  \centering
  \includegraphics[width=\linewidth]{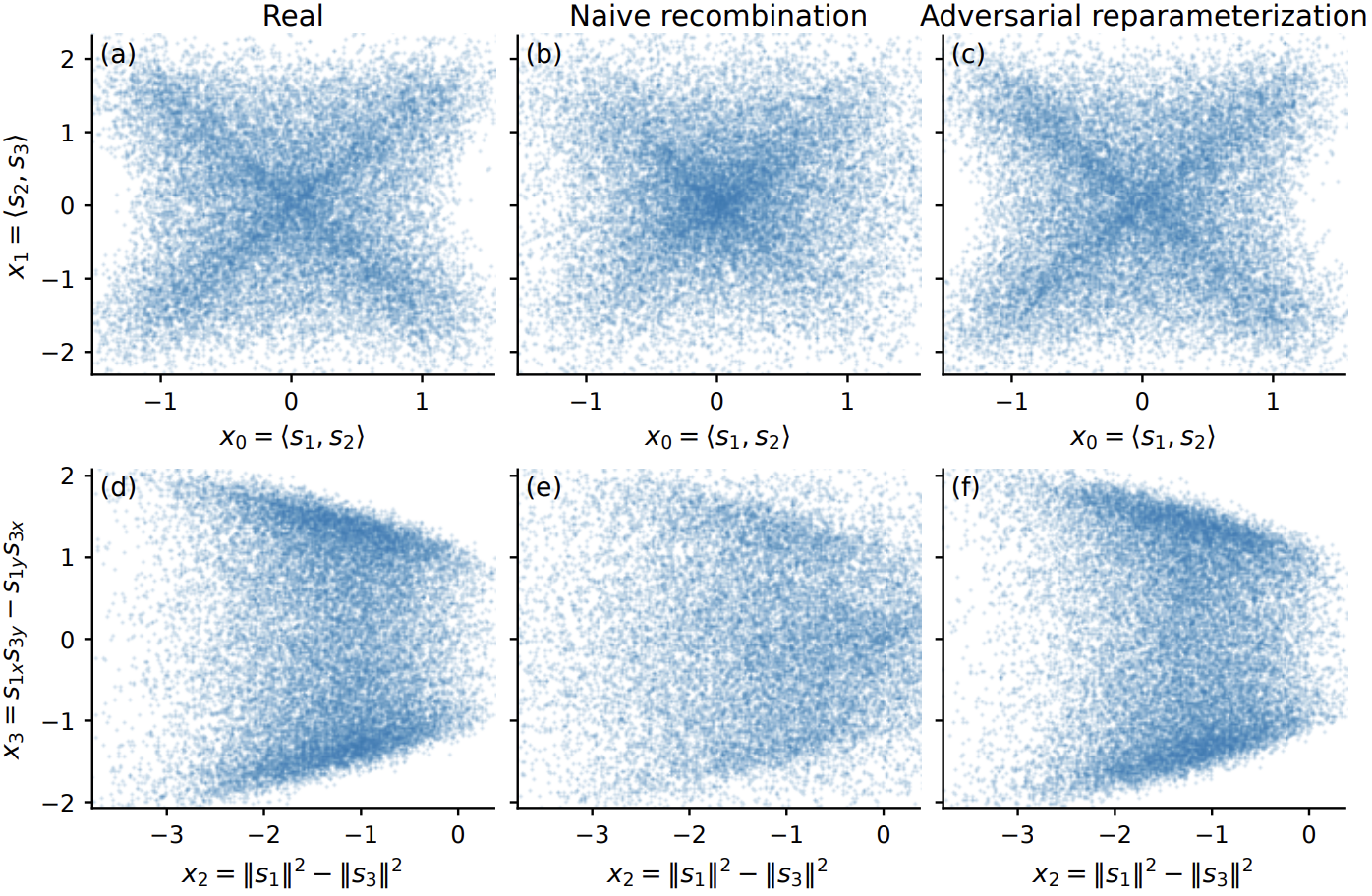}
  \caption{
  Synthetic recombination experiment.
  Left: real samples.
  Middle: naive recombination in entangled coordinates produces off-manifold
  samples.
  Right: discriminator feedback restores the geometry of real data.
  Top row shows $(x_0, x_1)$ projections; bottom row shows $(x_2, x_3)$ projections.
  }
  \label{fig:synthetic_scatter}
\end{figure}

\begin{table}[h!]
\centering
\caption{Synthetic recombination metrics (mean $\pm$ std over 3 seeds).
Lower is better ($\downarrow$), higher is better ($\uparrow$).
For reference, real samples have Mahalanobis$^2 = 3.99 \pm 0.002$.}
\label{tab:synthetic_metrics}
\begin{tabular}{l c c}
\toprule
Metric & Naive recomb. & + discriminator \\
\midrule
Mahalanobis$^2 \downarrow$
  & $7.637 \pm 0.459$
  & $4.101 \pm 0.133$ \\
MCC $\uparrow$
  & $0.603 \pm 0.018$
  & $0.757 \pm 0.018$ \\
MIG $\uparrow$
  & $0.060 \pm 0.008$
  & $0.122 \pm 0.043$ \\
TC $\downarrow$ & $1.0946 \pm 0.0405$ & $0.8409 \pm 0.3163$ \\
Block corr $\downarrow$
  & $0.214 \pm 0.007$
  & $0.039 \pm 0.060$ \\
\bottomrule
\end{tabular}
\end{table}

\begin{figure}[hbt!]
    \includegraphics[width=\columnwidth]{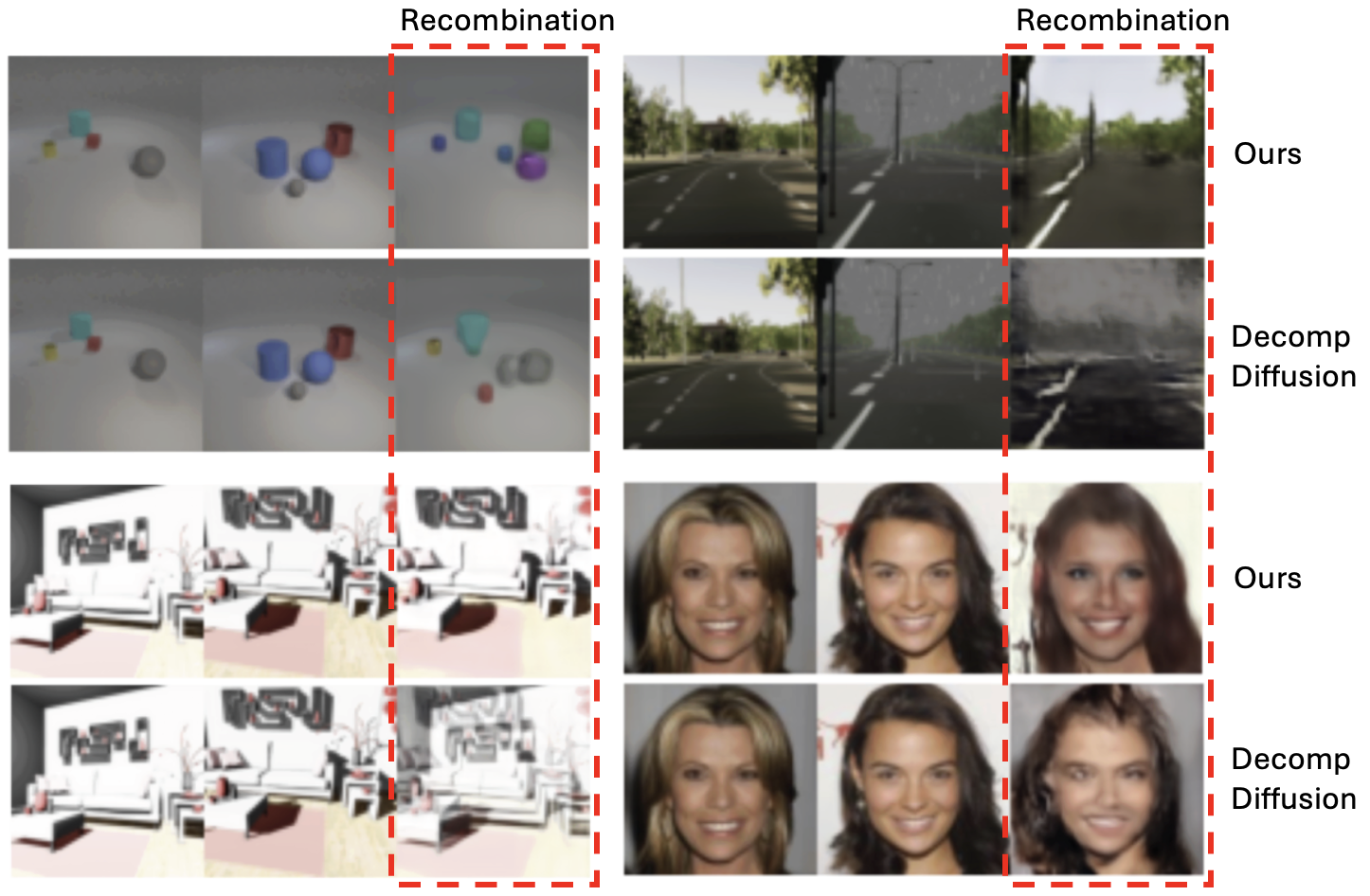}
    \caption{For each subfigure, the first two images are source samples drawn from the dataset, and the third image is a recombined generation obtained by mixing latent factors from the two sources.
Across all datasets (CLEVR, Virtual KITTI, Falcor3D, and CelebA-HQ; left-to-right, top-to-bottom), our method produces recombined samples that remain visually coherent and structurally consistent, while Decomp Diffusion often exhibits recombination artifacts.\looseness=-1}
    \label{fig:img_comps}
\end{figure}

\begin{figure*}[t]
    \centering
\includegraphics[width=0.9\textwidth]{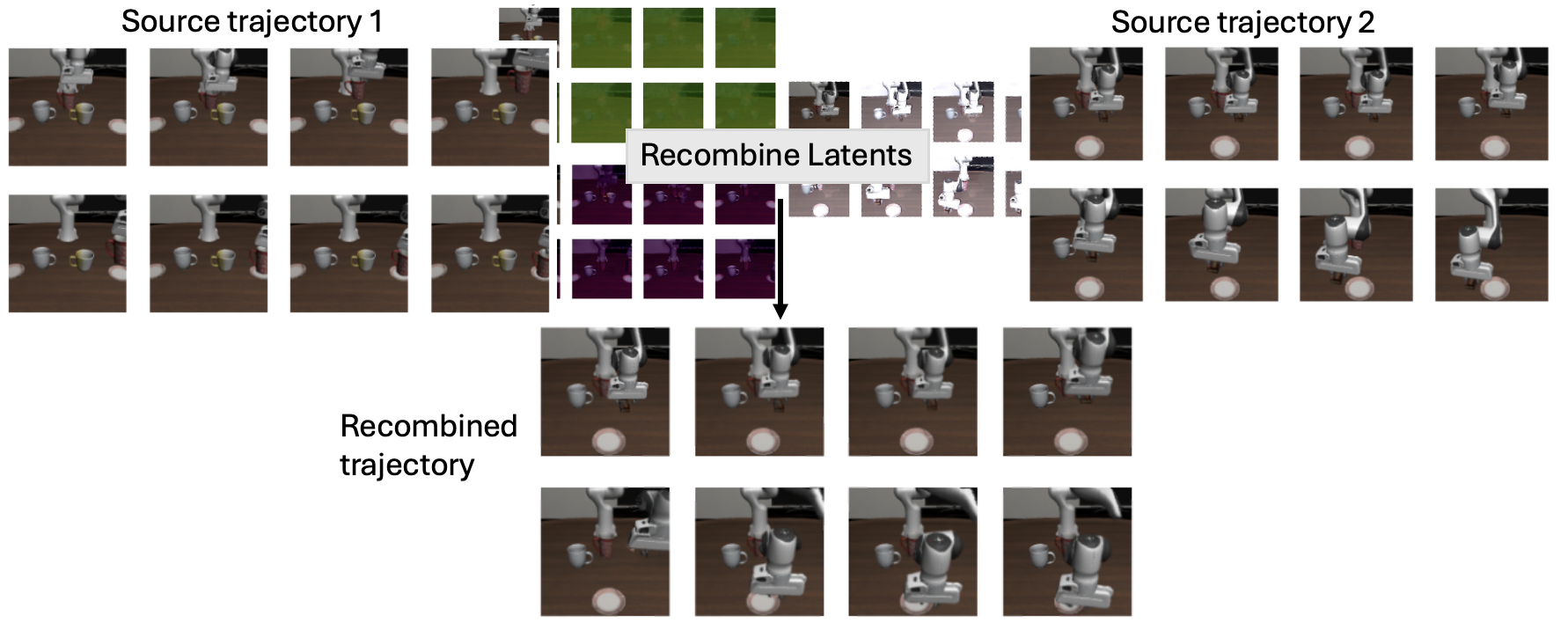} 
    \caption{Recombined video plan from Scene 5 and Scene 6 from Libero. The two source scenes are ``Red mug on right plate" from Scene 5 and ``Chocolate pudding left of plate" from Scene 6. The generated video plan shows the robot attempting to place the pudding on the edge of the plate. This specific action sequence does not appear in the original demonstrations, which only include placements beside the plate. The influence of Scene 5 can be observed in the trajectory as the robot arm is pulled sideways, suggesting interactions between the recombined behaviors.\looseness=-1}
\label{fig:recombine}
\end{figure*}

\subsection{Image Settings}

To evaluate the effectiveness of our proposed approach in image recombination, we compare our discriminator-driven method against the baseline Decomp Diffusion model across the image datasets Falcor3D \cite{falcor3d}, CelebA-HQ \cite{celebahq}, VKITTI \cite{7780839}, and CLEVR \cite{clevr}. Our evaluation primarily focuses on the image quality of recombined samples. Then, we discuss the empirical findings of image quality and disentanglement using the Falcor3D dataset. We compare against a reimplementation of Decomp Diffusion under our evaluation pipeline. Absolute metrics may differ from previously reported values; we focus on relative improvements under matched conditions in our codebase.\looseness=-1

\paragraph{Perceptual Quality of Recombined Images.} 
We assess the perceptual quality of recombined images through qualitative inspection and the Fréchet Inception Distance (FID). Table \ref{table:fid} presents the FID scores for both methods across datasets. Our approach consistently outperforms the baseline, reducing FID scores across all datasets. Our results highlight how a discriminator feedback can improve the perceptual consistency as well as finding more independent and semantically meaningful latent components. \looseness=-1

\paragraph{Disentanglement Evaluation. } 
We additionally evaluate how well our approach promotes disentangled latent spaces. We use MIG and MCC as metrics. We evaluate on the Falcor3D dataset, a synthetic benchmark for evaluating disentangled representations in generative models. With ground truth factor labels, Falcor3D enables precise assessment of whether models successfully separate independent attributes and serves as a benchmark for disentanglement metrics. Table \ref{table:disentanglement} shows that our method improves both MIG and MCC scores for the Falcor3D dataset. For a fixed trained model, repeated evaluation yields consistent MIG and MCC scores. These results suggest the effect of the discriminator can lead to the discovery of better factorization of independent generative factors.\looseness=-1

\begin{table}[htbp]
\centering
\label{tab:fid_best}
\begin{tabular}{lcc}
\toprule
\textbf{Dataset} & \textbf{Decomp Diffusion} & \textbf{Ours} \\
\midrule
CelebA-HQ  & $82.70\pm 19.39$ & $43.98\pm 0.78$ \\
Falcor3D  & $157.11\pm 9.53$ & $130.19\pm 3.62$ \\
VKITTI & $88.46\pm3.81$ & $84.22\pm 0.19$ \\
CLEVR & $25.70\pm 3.75$ & $ 24.16\pm 0.16$ \\
\bottomrule
\end{tabular}
\caption{\textbf{FID comparison on \textit{recombined samples}.} 
Samples from Decomp Diffusion and from training with discriminator feedback. For each dataset, we generated 10K samples from recombined latents. We compute the FID  (lower is better) against the original dataset distribution, and use it as a proxy for perceptual quality.}
\label{table:fid}
\end{table}
\begin{table}[hbt]
\centering
\begin{tabular}{lcccccccc}
\toprule
\textbf{Model} 
 & \textbf{MIG} $\uparrow$ & \textbf{MCC} $\uparrow$ \\
\midrule
InfoGAN & $0.025 \pm 0.01$ & $0.527 \pm 0.019$ \\
$\beta$-VAE & $0.090 \pm 0.01$ & $0.517 \pm 0.016$ \\
MONet & $0.139 \pm 0.020$ & $0.657 \pm 0.008$ \\
COMET* & $0.010 \pm 0.002$ & $0.573 \pm 0.003$ \\
Decomp Diffusion* & $0.065 \pm 0.019$ & $0.657 \pm 0.025$ \\
Ours* ($\lambda = 0.005$) & $0.118 \pm 0.012$ & $0.707 \pm 0.011$ \\
Ours* ($\lambda = 0.01$) & $0.116 \pm 0.058$ & $0.685 \pm 0.020$ \\
Ours* ($\lambda = 0.02$) & $0.141 \pm  0.031$ & $0.640 \pm 0.038$\\
\bottomrule
\end{tabular}
\caption{Comparison of MIG and MCC disentanglement metrics on Falcor3D. Asterisks denote models reimplemented and evaluated in our codebase (mean ± std over three independent training runs); unasterisked entries report previously published results and are included for reference. Brief summaries of baseline methods are provided in the appendix.}
\label{table:disentanglement}
\end{table}
Our approach optimizes both disentanglement and the compositional generation process itself, by regularizing recombined samples to be indistinguishable from single-source generations.
This contrasts with prior work that studies disentanglement or compositional generalization separately, and aligns with findings that disentangled representations alone may still fail to compose correctly \citep{montero2022lost, duan2020}.\looseness=-1

\subsection{Videos for Robotics Setting}
\label{sec:videos-robotics}
Robotic demonstration datasets are typically narrow: they contain a limited set of task instances, object placements, and action styles. Yet, downstream planning and control require coverage: a policy must succeed across many initial states and subtle variations that were never demonstrated. A natural way to obtain additional training signal is to generate new plausible rollouts. Many tabletop manipulation tasks are approximately compositional in time: a trajectory can be viewed as a sequence of reusable action primitives (reach, grasp, lift, transport, place) modulated by scene context and object identity. If a video model learns latent factors that isolate these primitives and their relevant object interactions, then recombining latents across demonstrations can synthesize task-relevant rollouts that are novel yet physically plausible. We demonstrate that unsupervised components of a video diffusion model can be used for \emph{directed exploration} in robotic environments by generating and executing recombined rollouts that expand state-space coverage beyond demonstrations.
\looseness=-1

We evaluate our approach in LIBERO \cite{libero}, a simulated environment designed for tabletop manipulation using a Franka Panda robot, featuring a variety of dexterous manipulation tasks. Each task requires the agent to reach a specified final state, which is defined by a descriptive sentence indicating the target object and the desired outcome. The action space includes changes in the end effector’s position and orientation, as well as the gripper’s applied force, resulting in a 7-dimensional control space. \looseness=-1

\textbf{Problem setup.}
We consider a finite-horizon partially observed control setting with observation space $\mathcal{O}\!\subset\! \mathbb{R}^{H \times W \times C}$ and action space $\mathcal{A}\!\subset\! \mathbb{R}^{7}$ (3D translation, 3D rotation, gripper open/close). A video of length $N$ is a sequence $x_{1:N} = (x_1,\dots,x_N)$ with $x_n\!\in\!\mathcal{O}$. We assume access to a dataset of demonstration videos $\mathcal{D}\!=\!\{x^{(m)}_{1:N_m}\}_{m=1}^M$ recorded in a table-top manipulation benchmark. We evaluate on two manipulation scenes from the LIBERO benchmark, denoted Scene 5 and Scene 6. Scene 5 involves placing mugs of varying appearance onto designated plates (e.g., “put red mug on left plate”), while Scene 6 focuses on placing different objects relative to a plate (e.g., “put chocolate pudding to the right of plate”). Each scene contains multiple task variants that differ in object type, color, and target relation; the complete list of task specifications is provided in Appendix \ref{appendix:libero}.\looseness=-1

\paragraph{Factorized video representation.}
Let $K$ denote the number of latent factors. An encoder $E_{\phi}$ maps a video suffix to $K$ latent components $z_{1:K}= E_{\phi}\big(x_{2:N}\big)$ where  $z_k \in \mathbb{R}^{d_k}$. Generation is performed by a diffusion model that is conditioned on the first frame. The denoiser $\bar{G}_{\theta}$ predicts the original video at every time-diffusion step $t$ conditioned on (i) the noised video, (ii) the factor latents, and (iii) the initial frame
$\epsilon_{\theta}\big(\tilde{x}_{1:N}^{(t)}, t \,\mid\, x_1, z_{1:K}\big)$. First-frame conditioning stabilizes training and focuses the representation on action primitives and object interactions rather than the initial scene layout.\looseness=-1

\textbf{Training objective with discriminator feedback.}
The base diffusion objective uses an $\ell_2$ reconstruction loss on the predicted clean sample:
\[
\mathcal{L}_{\mathrm{diff}}(\theta,\phi)
\!=\!\mathbb{E}\!\!_{\substack{x_{1:N}\sim \mathcal{D},\\
t\sim \mathcal{U}\{1,\dots,T\},\\
\epsilon\sim \mathcal{N}(0,I)}}\!
\!\Big[
\big\|
x_{1:N}
-\hat{x}_{\theta}\big(\tilde{x}^{(t)}_{1:N}, t \big| x_1, z_{1:K}\big)
\big\|_2^2
\Big],
\]
where $\tilde{x}^{(t)}_{1:N}$ denotes the noised trajectory at diffusion step $t$.

Given source videos $x^A$ and $x^B$ and timestep $t$, we introduce a discriminator $D_{\psi}$ that distinguishes single-source generations
$\hat{x}^{\text{single}}_{1:N} = \bar{G}_{\theta}(x^{A,(t)},t,x^A_1, z_{1:K})$
from recombined generations
$\hat{x}^{\text{recomb}}_{1:N} = \bar{G}_{\theta}(x^{A,(t)}, t,x^A_1,\tilde{z}_{1:K})$ where $\bar{G}_{\theta}(x,t,x_1,z_{1:K})=\frac{1}{K}\sum_{j=1}^K \bar{G}_{\theta}(x,t,x_1,z_j)$.
The discriminator is trained with the objective
\begin{align*}
\mathcal{L}_{D}(\psi)
\!&=\!\!-\mathbb{E}\big[\log D_{\psi}(\hat{x}^{\text{single}}_{1:N})\big]\!-\!\mathbb{E}\big[\log\big(1\!-\!D_{\psi}(\hat{x}^{\text{recomb}}_{1:N})\big)\big],
\end{align*}
while the generator and encoder are trained adversarially to make recombined generations
indistinguishable from single-source ones:
\begin{align*}
\mathcal{L}_{\text{adv}}(\theta,\phi)
&=
- \mathbb{E}\big[\log D_{\psi}(\hat{x}^{\,\text{recomb}}_{1:N})\big].
\end{align*}
The generator and encoder are optimized with the combined objective
$\mathcal{L}_{\text{diff}} + \lambda\,\mathcal{L}_{\text{adv}}$,
while the discriminator is optimized with $\mathcal{L}_{D}$.
Training alternates between these objectives. Intuitively, $D_{\psi}$ provides a self-supervised signal that penalizes recombinations
that are less consistent with the distribution of internally generated single-source rollouts.\looseness=-1


\begin{algorithm}[t]
\caption{Compositional video training with single-source vs recombined discriminator}
\label{alg:video-train}
\begin{algorithmic}[1]
\STATE \textbf{Input:} dataset $\mathcal{D}$, encoder $E_{\phi}$, denoiser $\epsilon_{\theta}$, discriminator $D_{\psi}$, weight $\lambda$
\WHILE{not converged}
  \STATE Sample two trajectories $x^{A}_{1:N}, x^{B}_{1:N} \sim \mathcal{D}$ and a recombination mask $S \in \{0,1\}^{K}$
  \STATE Encode latents $z^{A}_{1:K}\!\leftarrow\! E_{\phi}(x^{A}_{2:N}), z^{B}_{1:K}\!\leftarrow\!E_{\phi}(x^{B}_{2:N})$
  \STATE Recombine latents: $\tilde{z}_{k} \leftarrow S_k \, z^{A}_k + (1-S_k) \,z^{B}_k$ for $k=1,\dots,K$
  \STATE Compute diffusion loss $\mathcal{L}_{\text{diff}}$ on $(x^{A}_{1:N})$  \COMMENT{standard diffusion training}
  \STATE Sample $t \sim \mathrm{Unif}(\{1,\dots,T\})$, $\epsilon\sim\mathcal{N}(0,I)$
  \STATE Forward diffusion: $x^{A,(t)} \leftarrow \sqrt{\bar{\alpha}_t}\,x^{A}_{1:N} + \sqrt{1-\bar{\alpha}_t}\,\epsilon$

  \STATE \textbf{Single-source:} 
  $\hat{x}^{\text{single}} \!\leftarrow\!\bar{G}_{\theta}(x^{A,(t)},t,x^A_1,z^{A}_{1:K})$ by conditioning on $(x^{A}_1, z^{A}_{1:K})$
  \STATE \textbf{Recombined:} 
  $\hat{x}^{\text{recomb}} \!\leftarrow\!\bar{G}_{\theta}(x^{A,(t)},t,x^A_1,\tilde{z}_{1:K})$
  conditioning on $(x^{A}_1, \tilde{z}_{1:K})$
  \STATE \textbf{Discriminator update (single-source/recombined):}\looseness=-1 
  \STATE
$\mathcal{L}(\psi)
\!=\! -\mathbb{E}\!\left[
\log D_\psi(\hat{x}^{\text{single}}_{1:N})
\!+\! \log\!\left(1 \!-\! D_\psi(\hat{x}^{\text{recomb}}_{1:N})\right)
\right]$
  \STATE $\psi \leftarrow \psi - \eta_{\psi}\nabla_{\psi}\mathcal{L}$
  \STATE \textbf{Generator/encoder update:}
  \STATE \hspace{1em} $\mathcal{L}_{\text{adv}}(\theta,\phi) \;=\; -\mathbb{E}\!\left[\log D_{\psi}(\hat{x}^{\,\text{recomb}}_{1:N})\right]$
  \STATE \hspace{1em} $\theta,\phi \leftarrow \theta,\phi - \eta\nabla_{\theta,\phi}\left(\mathcal{L}_{\text{diff}} + \lambda\mathcal{L}_{\text{adv}}\right)$
\ENDWHILE
\end{algorithmic}
\end{algorithm}

\textbf{Evaluation protocol and baselines.}
We evaluate generated rollouts using two complementary metrics: a \emph{physical consistency} score $L_{\text{phys}}$, which measures disagreement between actions inferred from generated frames and those executed in the environment, and a \emph{coverage} metric $\mathrm{Cov}_{\Delta}$, which quantifies exploration breadth in discretized joint-state space.
Formal definitions and implementation details are provided in Appendix~\ref{appendix:libero}. To compare exploration fairly across planning strategies, we adopt a paired-episode evaluation protocol. For each episode, we fix the task instance, demonstration index, environment seed, and a short warm-up horizon during which ground-truth actions from the demonstration are executed. We include a short warm-up phase so that the robot arm reliably picks up the object before executing generated plans. Without this warm-up, recombined trajectories frequently fail to grasp the object, resulting in rollouts that terminate early or do not meaningfully explore the scene. This reflects a current limitation of our approach, which focuses on recombining and executing trajectories after object pickup. Addressing reliable grasping jointly with compositional planning is an important direction for future work. 

All methods are evaluated from the identical post--warm-up state using the same pretrained visuomotor diffusion policy and execution horizon; they differ only in how the goal video is constructed. Our \emph{Recombine+Generate} method recombines latent factors across two demonstrations and decodes them with the diffusion model to produce a novel goal video. As a non-generative baseline, \emph{Demo-Self} uses the remaining frames of the same demonstration as the goal video, without recombination or generation. The \emph{Demo-Other} baseline instead uses a goal video taken from a different demonstration of the same task. Finally, \emph{Shuffle-Self} forms the goal video by locally shuffling frames within short temporal windows of the same demonstration, preserving coarse temporal drift while randomizing local temporal structure. Each method is run for 10 paired episodes, and we report the mean $\pm$ standard deviation of joint-state coverage as a function of execution steps after warm-up.
(Fig.~\ref{fig:states_over_time}).\looseness=-1

\begin{figure}[t]
  \centering
  \includegraphics[width=0.85\linewidth]{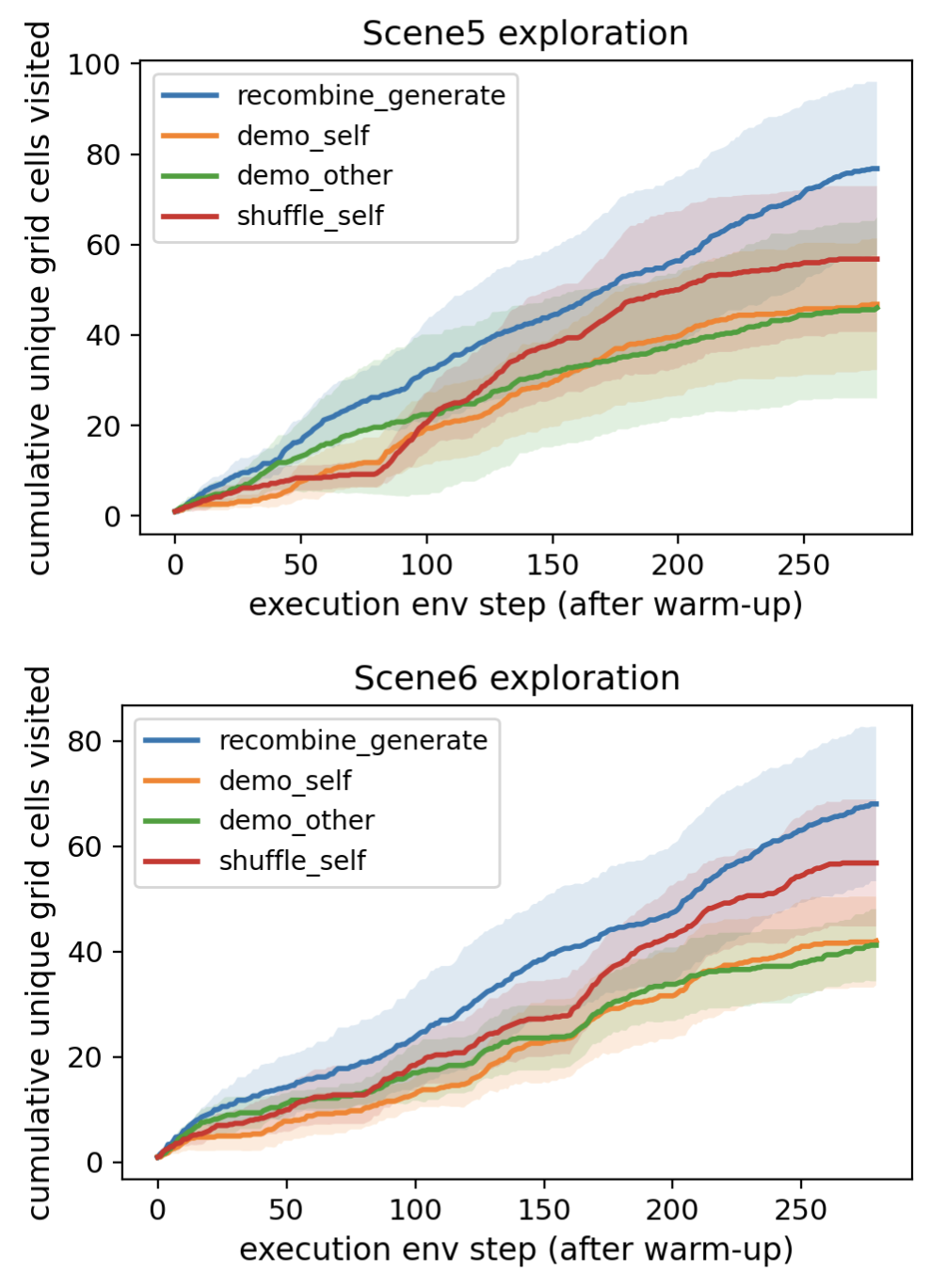}
  \caption{
 Mean ± std joint-state coverage over 10 paired runs for Scene5 (top) and Scene6 (bottom). Recombined videos yield action trajectories with consistently higher exploration than non-compositional baselines under identical warm-up conditions.
  }
\label{fig:states_over_time}
\end{figure}
\textbf{Qualitative results.}
Figure \ref{fig:recombine} is a demonstration of producing a new robotic video sample based on combining latent factors from two different videos in the training dataset.

\textbf{Ablations and trade-offs.} We sweep $\lambda$ and observe an inverted U-shaped behavior: small values reduce $\mathcal{L}_{\text{phys}}$ and improve $\mathrm{Cov}_{\Delta}$ by pruning unlikely recombinations, whereas excessively large values over-regularize, collapsing factor semantics and harming reconstruction. This mirrors the image setting but is now measured with action-grounded metrics. The video formulation casts recombination as factor swapping in a first-frame conditioned diffusion model, regularized by a sequence-level discriminator. Physical plausibility is evaluated by rolling out actions extracted from generated plans, while coverage captures the combinatorial reach of recomposed behaviors. This method yields a principled, action-grounded way to assess both realism and exploratory generalization in robotics videos. We are not aware of prior work that evaluates latent recombination in generative models through action-grounded metrics that assess physical executability and exploration coverage.

Table \ref{table:state-space} reports the state-space coverage across different models. We observe a consistent increase in explored grid cells when incorporating the discriminator, where the best-performing variant achieves improvements of 35.8\% in Scene5 (9440 → 12816) and 73.8\% in Scene6 (5400 → 9385) over the baseline. \looseness=-1

\section*{Acknowledgements}
We gratefully acknowledge insightful discussions with Zhuo Chen, Ajay Arora, Sarah Liaw, and Anastasiia Alokhina. Research was sponsored by the Department of the Air Force Artificial Intelligence Accelerator and was accomplished under Cooperative Agreement Number FA8750-19-2-1000. The views and conclusions contained in this document are those of the authors and should not be interpreted as representing the official policies, either expressed or implied, of the Department of the Air Force or the U.S. Government. The U.S. Government is authorized to reproduce and distribute reprints for Government purposes notwithstanding any copyright notation herein. In addition, this work is supported by the National Science Foundation under Cooperative Agreement PHY-2019786 (The NSF AI Institute for Artificial Intelligence and Fundamental Interactions, http://iaifi.org/). Archer Wang is supported by the NSF graduate research fellowship. Finally, this work was supported by NSF grant CCF 2338816, and has been made possible in part by a gift from the Chan Zuckerberg Initiative Foundation to establish the Kempner Institute for the Study of Natural and Artificial Intelligence at Harvard University.\looseness=-1

\section*{Impact Statement}
Our proposed approach poses no immediate social risks in its current form, as it is primarily evaluated on standard datasets and simulated environments. However, the ability to manipulate latent representations could be misused in ways that amplify biases inherent in the training data, leading to unintended consequences in decision-making systems. Ensuring responsible deployment is crucial to avoid potential ethical risks. Despite these concerns, our approach offers significant potential benefits across various domains, including robotics, automated planning, and creative content generation, by enabling improved generalization and compositional understanding of complex environments.

\nocite{dai2024advdiffgeneratingunrestrictedadversarial}
\bibliography{main}
\bibliographystyle{icml2026}

\newpage
\appendix
\onecolumn

\section{Detailed Background on Decomp Diffusion and COMET}
\label{app:decomp_comet_background}

This appendix reviews two closely related frameworks for unsupervised compositional representation learning that our work builds on top of: COMET \cite{du2021unsupervised} and Decomp Diffusion \cite{decompdiffusion}. Both methods aim to decompose images into independently manipulable latent factors and to support recombination by composing these factors at generation time. The primary difference lies in how composition is implemented and optimized.

\subsection{COMET: Composable Energy-Based Representations}

COMET represents each latent factor as an energy function over images. Given an image $x_i \in \mathbb{R}^D$, an encoder produces a set of latent variables $z_1, \dots, z_K = \mathrm{Enc}_\phi(x_i)$, and a shared energy network defines a family of energies $E_\theta(x, z_k) : \mathbb{R}^D \times \mathbb{R}^M \to \mathbb{R}$. A single latent $z_k$ specifies an energy landscape whose minimum corresponds to an image consistent with that factor. Composition is defined by summing energies and jointly minimizing over the image:
\begin{equation}
x^\star(z_{1:K})
=
\arg\min_x \sum_{k=1}^K E_\theta(x, z_k).
\label{eq:comet_compose}
\end{equation}
COMET is trained by reconstructing the input image as the minimum-energy configuration under the inferred latents. The learning objective minimizes the reconstruction error between the input and the result of energy minimization,
\begin{equation}
\mathcal{L}_{\mathrm{COMET}}
=
\left\|
\arg\min_x \sum_{k=1}^K E_\theta\!\big(x, \mathrm{Enc}_\phi(x_i)[k]\big)
-
x_i
\right\|_2^2.
\label{eq:comet_obj}
\end{equation}
Since the argmin is intractable to compute, it is approximated by unrolling several steps of gradient descent on $x$,
\begin{equation}
x^{(n)} = x^{(n-1)} - \lambda \nabla_x \sum_{k=1}^K E_\theta(x^{(n-1)}, z_k),
\qquad n=1,\dots,N,
\label{eq:comet_unroll}
\end{equation}
with $x^{(0)}$ initialized from noise. Training therefore requires backpropagation through an unrolled optimization procedure, which can be computationally expensive and sensitive to hyperparameters. Despite this cost, COMET provides a conceptually clean formulation of compositionality: factors are combined by additive energies, and recombination is implemented by minimizing the same summed objective using latents drawn from different images.

\subsection{Decomp Diffusion: Composing Denoising Components}

Decomp Diffusion builds on diffusion probabilistic models and replaces explicit energy minimization with diffusion denoising dynamics. For the diffusion training in Decomp Diffusion \cite{decompdiffusion}, a noisy image
\[
x_i^{t} = \sqrt{\bar\alpha_t}\,x_i + \sqrt{1-\bar\alpha_t}\,\epsilon,
\qquad \epsilon \sim \mathcal{N}(0,I),
\]
is passed to a denoiser $\epsilon_\theta(x_i^t, t)$ trained to predict the noise (or original image) via
\begin{equation}
\mathcal{L}_{\mathrm{DDPM}}
=
\left\|
\epsilon - \epsilon_\theta(x_i^t, t)
\right\|_2^2.
\label{eq:ddpm_loss}
\end{equation}


When the denoiser is conditioned on a latent $z$, $\epsilon_\theta(x_t, t, z)$ corresponds to the gradient of a latent-dependent energy $E_\theta(x, z)$. To enable decomposition, Decomp Diffusion encodes each input image into $K$ latents,
\[
z_1,\dots,z_K = \mathrm{Enc}_\phi(x_i),
\]
and trains a shared conditional denoiser such that the mean of per-latent denoising predictions matches the true noise (or original image):
\begin{equation}
\frac{1}{K}\sum_{k=1}^K \epsilon_\theta(x_i^t, t, z_k) \approx \epsilon.
\label{eq:decomp_sum}
\end{equation}
The resulting training objective is
\begin{equation}
\mathcal{L}_{\mathrm{Decomp}}
=
\left\|
\epsilon - \frac{1}{K}\sum_{k=1}^K \epsilon_\theta(x_i^t, t, z_k)
\right\|_2^2.
\label{eq:decomp_loss}
\end{equation}
Latent dimensionality constraints act as an information bottleneck, encouraging different $z_k$ to capture distinct factors. At inference time, composition is performed by averaging denoising predictions from any chosen set of latents at each diffusion step, $\epsilon_{\mathrm{pred}}(x_t, t) = \frac{1}{K}\sum_{k} \epsilon_\theta(x_t, t, z_k),
$ and running the standard diffusion sampling procedure using $\epsilon_{\mathrm{pred}}$. This enables recombination without solving an inner optimization problem.

\subsection{Conceptual Relationship Between COMET and Decomp Diffusion}

Both COMET and Decomp Diffusion define composition through additive structure. COMET composes factors by summing energies and explicitly minimizing over the image, while Decomp Diffusion composes factors by averaging denoising (energy-gradient) contributions and relying on diffusion dynamics to perform implicit optimization. The latter avoids backpropagation through unrolled image updates during training and yields a more stable and scalable learning procedure, while preserving the same compositional intuition.

\subsection{Other Baseline Methods} InfoGAN \citep{infogan} extends GANs by maximizing mutual information between a subset of latent variables and generated samples, encouraging individual latent dimensions to capture interpretable factors of variation. $\beta$-VAE \cite{betavae} promotes disentangled representations by upweighting the KL divergence term in the variational objective, encouraging statistical independence among latent dimensions within a fixed-dimensional latent space. MONet \cite{burgess2019monet} is an object-centric generative model that decomposes scenes into multiple components using an iterative attention mechanism and soft segmentation masks, enabling pixel-level decomposition into object-like regions while primarily focusing on spatial structure rather than global scene factors.



\section{Experiment Details}
In this section, we provide dataset details in \cref{appendix:dataset_details}, and present training and inference
details of our method in \cref{appendix training details} and  \cref{appendix inference details}.

\subsection{Dataset Details}
\label{appendix:dataset_details}

We evaluate our method on four image datasets commonly used for studying compositionality and disentangled representations: CelebA-HQ, Falcor3D, Virtual KITTI 2, and CLEVR.

\paragraph{CelebA-HQ.}
CelebA-HQ~\cite{celebahq} is a high-quality face image dataset derived from CelebA, consisting of aligned human face images with rich variation in global attributes such as facial structure, hair style, hair color, skin tone, and expression. Following prior work, we use images resized to $64 \times 64$ resolution.

\paragraph{Falcor3D.}
Falcor3D~\cite{falcor3d} is a synthetic dataset rendered using a physically based graphics pipeline, designed specifically for evaluating disentanglement. Images depict simple 3D scenes with controlled generative factors including object geometry, material properties, lighting direction and intensity, and camera pose. Ground-truth factor annotations are available, enabling quantitative evaluation of disentanglement metrics such as MIG and MCC. We use the standard $128 \times 128$ image resolution.

\paragraph{Virtual KITTI 2.}
Virtual KITTI 2 (VKITTI2)~\cite{7780839} is a photorealistic synthetic dataset that mirrors real-world driving scenes from the KITTI benchmark. It provides rendered outdoor scenes with structured variation in camera viewpoint, lighting, weather conditions, background layout, and object configurations. In our experiments, we use VKITTI2 to evaluate decomposition of complex global scene factors and the plausibility of recomposed scenes under latent recombination.

\paragraph{CLEVR.}
CLEVR~\cite{clevr} is a synthetic object-centric dataset consisting of images containing multiple simple 3D objects placed on a tabletop. Each scene varies along interpretable factors such as object shape, color, size, material, and spatial arrangement. Although originally designed for visual reasoning, CLEVR is widely used for evaluating object-level decomposition and compositional generalization. We use $64 \times 64$ rendered images and focus on recombination of object-centric latent factors.

\subsection{Training Overview}
\label{appendix training details}

We used standard denoising training to train our denoising
networks, with $1000$ diffusion steps and squared cosine beta
schedule. In Decomp Diffusion and in our implementation, the denoising network $\epsilon_\theta$ is trained to directly predict the original image $x_0$. To train our diffusion model that conditions on inferred
latents $z_k$, we first utilize the latent encoder to encode input images into features that are further split into a set of latent
representations $\{z_1,\dots, z_k\}$. For each input image, we
then train our model conditioned on each decomposed latent
factor $z_k$ using standard denoising loss. Each model is trained for 20k - 40k steps on an NVIDIA V100 32GB machine, and we use a batch size of 16 when training.


\paragraph{Single-step discriminator training.}
Applying the discriminator to fully sampled outputs would require running an
iterative denoising process (e.g., DDIM with tens of steps) during training,
significantly increasing computational cost and complicating gradient-based
optimization. Instead, we train the discriminator on denoised predictions obtained
from a single diffusion step while sharing the same noisy input $x_t$. Although these predictions are not exact samples from the model, they provide a
consistent and differentiable surrogate that allows the discriminator to detect
incoherent latent recombinations. Because DDIM sampling repeatedly applies the same denoiser across timesteps, shaping single-step predictions across noise levels is a practical proxy for improving the quality of the final multi-step samples.


\subsection{Inference Details}
\label{appendix inference details}

When generating images, we use DDIM with 50 steps for
faster image generation. 

\textbf{Decomposition.} To decompose an image $x$, we first
pass it into the latent encoder $\mathrm{Enc}_\phi$ to extract out latents $\{z_1, \cdots, z_K\}$. For each latent $z_k$, we generate an image
corresponding to that component by running the image generation algorithm on $z_k$.

\textbf{Reconstruction.} To reconstruct an image $x$ given latents $\{z_1, \cdots, z_K\}$, in the denoising process, we predict $\epsilon$ by
averaging the model outputs conditioned on each individual $z_k$. The final result is a denoised image which incorporates all inferred components, i.e., reconstructs the image.

\textbf{Recombination.} To recombine images $x$ and $x'$, we recombine their latents $\{z_1, \dots, z_K\}$ and $\{z_1', \cdots, z_K'\}$. We
select the desired latents from each image and condition on
them in the image generation process, i.e., predict $\epsilon$ in the denoising process by averaging the model outputs conditioned
on each individual latent.

To additively combine images $x$ and $x'$ so that the result
has all components from both images, e.g., combining two
images with 4 objects to generate an image with 8 objects,
we modify the generation procedure. In the denoising process, we assign the predicted $\epsilon$ to be the mean over all $2 \times K$ model outputs conditioned on individual latents in $\{z_1, \cdots, z_K\}$ and $\{z_1', \cdots, z_K'\}$. This results in an image
with all components from both input images.

\subsection{Ablations}
As \(\lambda\) increases, recombined sample quality initially improves on both datasets, 
indicating that discriminator feedback helps constrain generations toward the data manifold, as seen in Figure \ref{fig:gamma_fid_comparison}.
However, overly strong guidance leads to worse FID, suggesting that excessive discriminator pressure can conflict with the diffusion reconstruction objective. We also include a qualitative example in Figure \ref{fig:latents} highlighting a potential degenerate behavior of the adversarial objective when the specific discriminator weight is too high. This motivates using a moderate discriminator weight in all subsequent experiments.

\subsection{Metric computation (MIG, MCC).}
Let $r(x)\in\mathbb{R}^D$ denote the representation returned by our encoder and let $y$ denote the ground-truth generative factors. We construct $r(x)$ by concatenating the $K$ component vectors produced by the encoder and use the encoder mean for each component. We compute both MIG and MCC over the individual coordinates of $r(x)$, treating each coordinate as a scalar latent variable. We do not use a learned probe or regression for either metric.

\textbf{MIG.} We sample 4{,}000 observations and factors, compute representations, and discretize each latent coordinate independently using a histogram discretizer with 20 bins. We estimate mutual information using \texttt{sklearn.metrics.mutual\_info\_score}. For each factor, MIG is the difference between the largest and second-largest mutual informations across latent coordinates, normalized by the factor entropy. We report the mean and standard deviation across factors.

\textbf{MCC.} We compute Spearman correlations between each representation coordinate and each ground-truth factor, apply Hungarian matching to the absolute correlation matrix to assign coordinates to factors, and report the mean absolute matched correlation.

\begin{figure}[h!]
    \begin{center}
    \begin{subfigure}{0.4\columnwidth}
        \centering
        \includegraphics[width=\columnwidth]{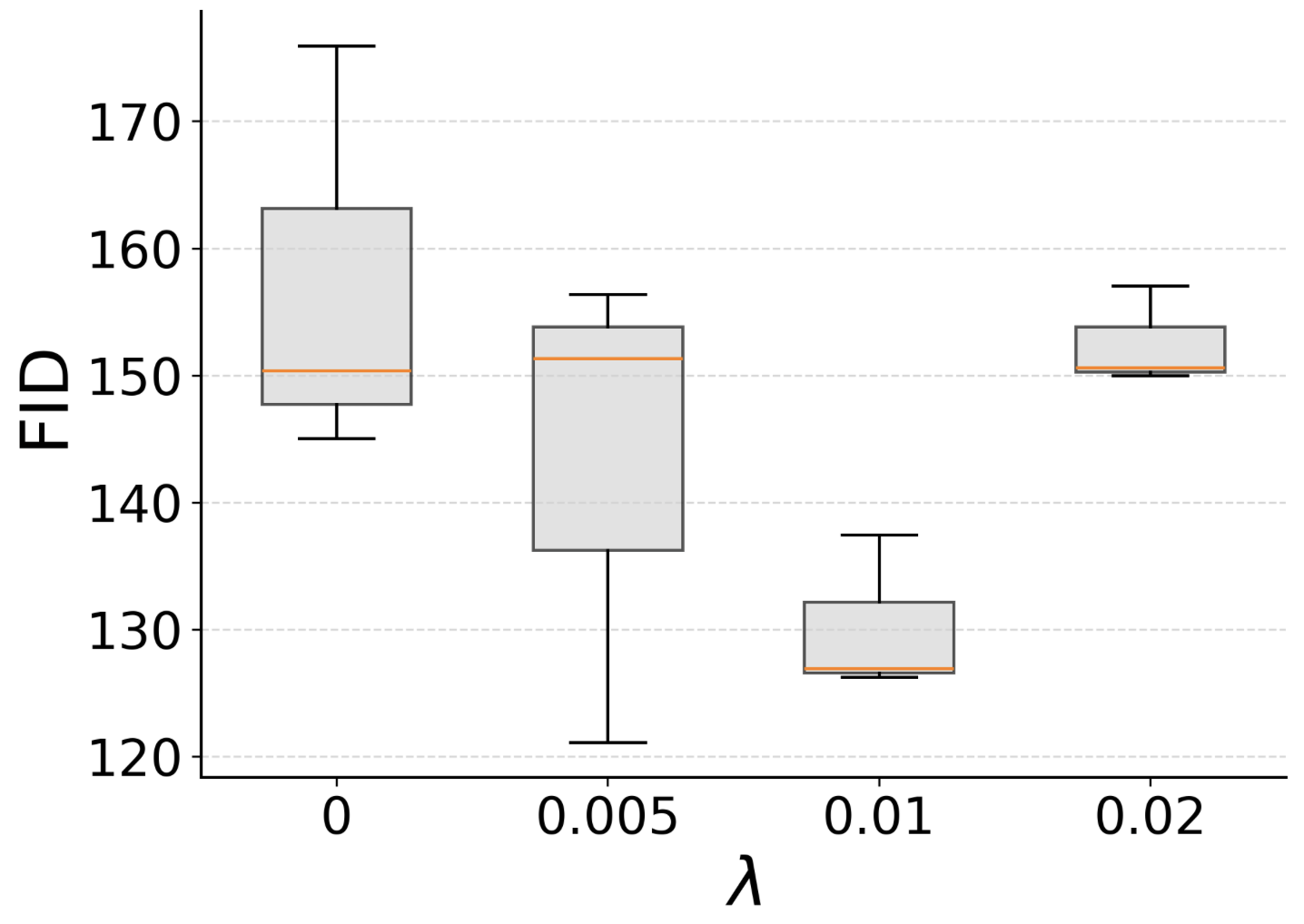} 
        \caption{Falcor3D.}
        \label{figs:fid_falcor3dv1}
    \end{subfigure}%
    \begin{subfigure}{0.4\columnwidth}
        \centering
        \includegraphics[width=\columnwidth]{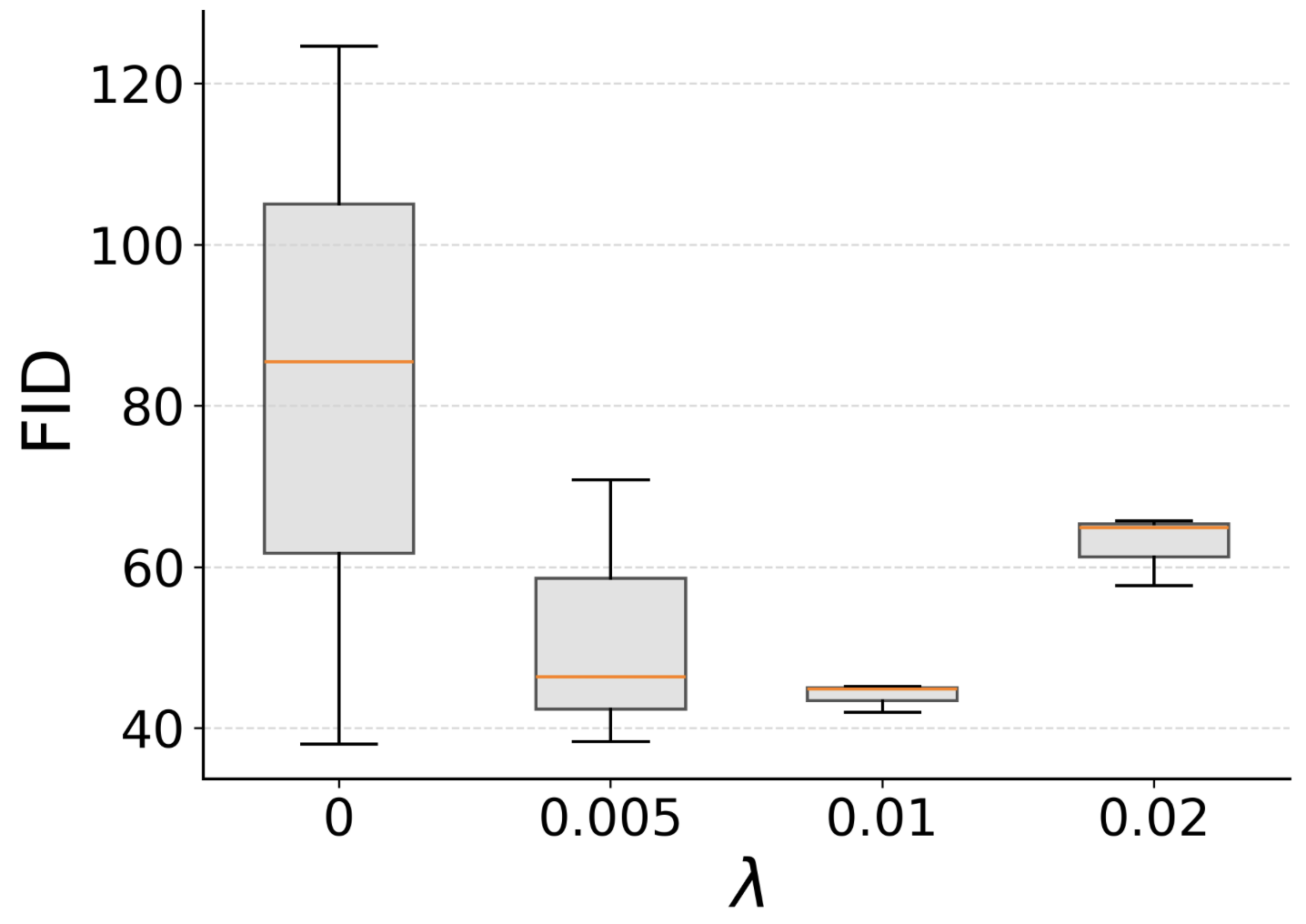} 
        \caption{CelebA-HQ.}
        \label{fig:fid_celebahq_2}
    \end{subfigure}
    \end{center}
    \caption{
        \textbf{Effect of discriminator strength (\(\lambda\)) on sample quality.}
        We plot the FID (lower is better) as a function of discriminator guidance weight \(\lambda\) for two datasets: 
        (a) \textbf{Falcor3D} and (b) \textbf{CelebA-HQ}. 
        Increasing \(\lambda\) initially improves FID, indicating that the discriminator effectively pulls recombined samples closer to the data manifold. 
        However, overly strong guidance (\(\lambda = 0.02\)) slightly degrades FID.
    }
    \label{fig:gamma_fid_comparison}
\end{figure}

\begin{figure}[h!]
    \centering
    \includegraphics[width=0.9\columnwidth]{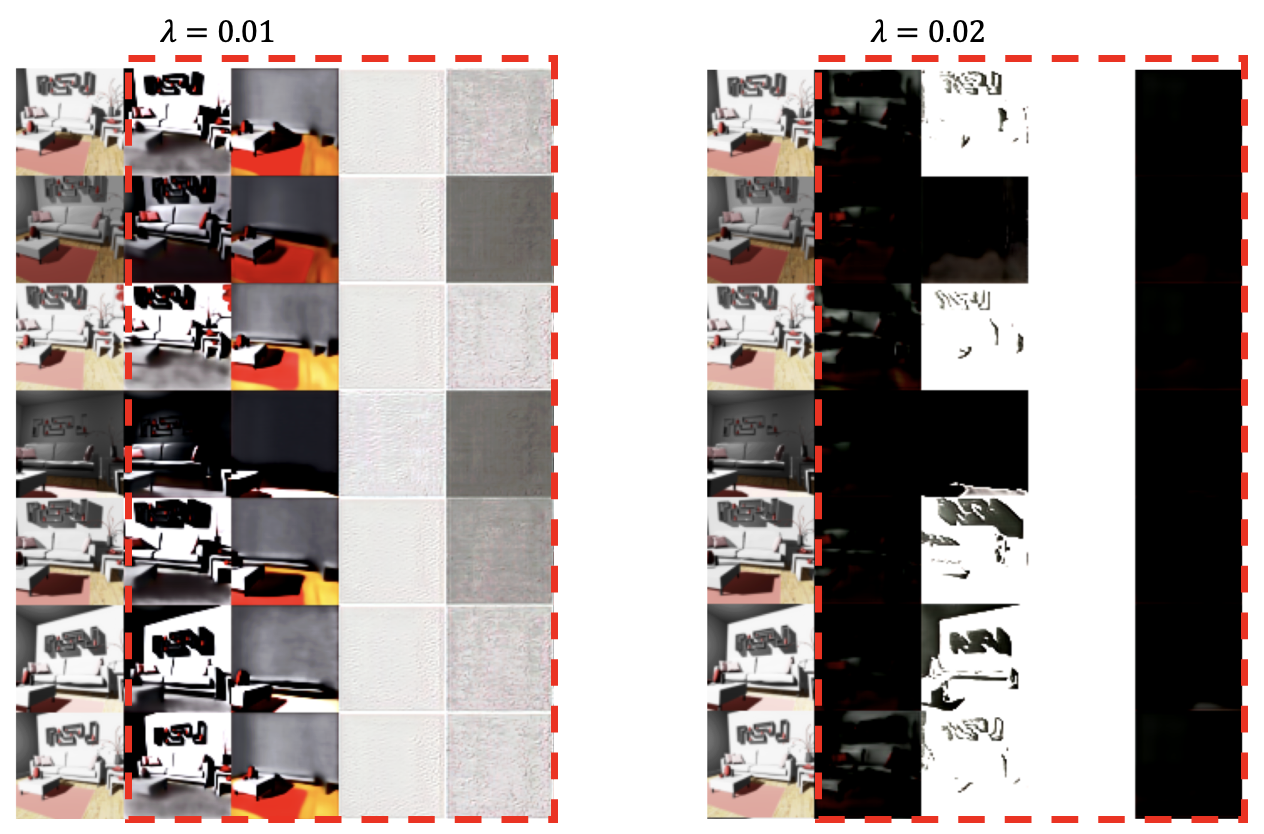}
    \caption{Qualitative visualization of learned latent components at two discriminator weights.
      \textbf{Left:} Non-collapsed representation at $\lambda=0.01$. \textbf{Right:} partial latent collapse observed in a single example at $\lambda = 0.02$. In each panel, the leftmost column shows input images from the dataset, while the red dashed region visualizes the outputs associated with individual latent components. Notably, for $\lambda = 0.02$, the last two latent columns produce nearly identical outputs across different inputs, indicating under-utilized or collapsed latent dimensions.
    }
    \label{fig:latents}
\end{figure}

\section{Additional Details: Synthetic Recombination Experiment}
\label{app:synthetic_recomb}

We construct independent latent factors
\( s = (s_1,s_2,s_3)\in\mathbb{R}^6 \),
with each \(s_k\in\mathbb{R}^2\) following a distinct non-Gaussian distribution
(ring, uniform square, and bimodal Gaussian).
Observed variables \(x\in\mathbb{R}^4\) are generated deterministically via
\begin{align}
x_0 &= \langle s_1, s_2 \rangle, \quad
x_1 = \langle s_2, s_3 \rangle, \\
x_2 &= \|s_1\|^2 - \|s_3\|^2, \quad
x_3 = s_{1x}s_{3y} - s_{1y}s_{3x},
\end{align}
which imposes explicit cross-factor constraints that characterize the data manifold.

To model entangled latent coordinates, factors are mapped through a fixed invertible linear transform
\(z = Ms\), where \(M\) is a random orthogonal matrix.
Naive recombination swaps 2D blocks directly in \(z\)-space between independent samples and then decodes using \(M^{-1}\).
Because blockwise recombination is not aligned with the true factorization,
this operation generically violates the interaction constraints above, producing off-manifold samples
(Fig.~\ref{fig:synthetic_scatter}, middle column).

We introduce a learnable linear reparameterization \( \hat{s} = W z \),
perform recombination in \(\hat{s}\)-coordinates, and map back via \(W^{-1}\) before decoding.
The matrix \(W\) is trained adversarially using a discriminator on \(x\),
which distinguishes real samples from recombined ones; a weak regularizer encourages \(W^\top W \approx I\).
This setup mirrors discriminator feedback used in diffusion-based recombination, while remaining analytically transparent. Although this does not strictly guarantee invertibility, 
$W$ does not become ill-conditioned in practice in our toy setting. We leave parameterizations with guaranteed invertibility (e.g., orthogonal or flow-based constructions) to future work.

To quantify deviation from the data manifold, we compute the squared Mahalanobis distance
\(
d_M(x) = (x-\mu)^\top \Sigma^{-1}(x-\mu)
\),
where \(\mu\) and \(\Sigma\) are the empirical mean and covariance of real samples in \(x\)-space.
This metric measures how inconsistent a sample is with the second-order statistics of real data,
independent of marginal scaling.
Discriminator feedback substantially reduces the mean Mahalanobis distance of recombined samples,
indicating recovery of a recombination-consistent coordinate system.\looseness=-1

\paragraph{Architecture and training details.}
The discriminator \(D:\mathbb{R}^4\to[0,1]\) is implemented as a lightweight multilayer perceptron
with two hidden layers of width 64, ReLU activations, and a sigmoid output.
It is trained using binary cross-entropy to distinguish real samples
\(x = f(s)\) from recombined samples obtained by blockwise recombination in latent space.
We emphasize that \(D\) has limited capacity and operates purely in observation space,
serving only to detect violations of the data manifold induced by recombination.

The reparameterization matrix \(W\in\mathbb{R}^{6\times6}\) is trained to minimize
\begin{equation}
\mathcal{L}(W)
\;=\;
\mathbb{E}_{\text{recomb}}\!\left[
-\log D\!\left(f\!\left(W^{-1} \,\mathrm{recombine}(Wz)\right)\right)
\right]
\;+\;
\lambda \,\|W^\top W - I\|_F^2,
\end{equation}
where the expectation is taken over randomly paired samples and random blockwise recombinations.
The decoder \(f\) and mixing matrix \(M\) are fixed throughout training.
No reconstruction loss is used, as the mapping from \(z\) to \(x\) is deterministic and invertible.

Recombination is performed by swapping entire two-dimensional blocks corresponding to latent components
between independently sampled latent vectors within a minibatch.
All models are trained using Adam with learning rate \(10^{-3}\), batch size 256,
and trained for 5{,}000 iterations.
Results are stable across random initializations of \(M\) and \(W\).

\subsection{Image Training Details} Hyperparameters and examples of recombined images are provided. \\
Baseline models have the same settings as the one from Decomp Diffusion.
For each dataset, we trained for 20k steps using a batch size of 16; for Falcor3D, we trained for 40k steps due to the higher 128×128 image resolution

Discriminator is adapted from DCGAN architecture. The discriminator is a convolutional neural network that maps an RGB image
$x \in \mathbb{R}^{3 \times H \times W}$ to a scalar in $(0,1)$.
For all experiments, images are resized to $H = W = 64$. The network consists of four strided convolutional blocks followed by a linear
discriminator:
\begin{itemize}
    \item \textbf{Conv1:} $4 \times 4$ convolution, $3 \rightarrow 64$ channels, stride $2$, padding $1$, followed by LeakyReLU$(0.2)$.
    \item \textbf{Conv2:} $4 \times 4$ convolution, $64 \rightarrow 128$ channels, stride $2$, padding $1$, followed by BatchNorm and LeakyReLU$(0.2)$.
    \item \textbf{Conv3:} $4 \times 4$ convolution, $128 \rightarrow 256$ channels, stride $2$, padding $1$, followed by BatchNorm and LeakyReLU$(0.2)$.
    \item \textbf{Conv4:} $4 \times 4$ convolution, $256 \rightarrow 512$ channels, stride $2$, padding $1$, followed by BatchNorm and LeakyReLU$(0.2)$.
\end{itemize}

Each convolution reduces the spatial resolution by a factor of $2$, yielding a
final feature map of size $512 \times (H/16) \times (W/16)$.
For $64 \times 64$ inputs, this corresponds to a $512 \times 4 \times 4$ tensor.
The final feature map is flattened and passed through a fully connected layer
of size $512 \cdot (H/16) \cdot (W/16) \rightarrow 1$, followed by a sigmoid
activation to produce a scalar output.

An interesting future direction is doing in-depth comparisons of how the model architectures change the effectiveness of the discriminator.

\subsection{Qualitative Results}
We show side-by-side comparisons between recombined samples produced by our method and those from Decomp Diffusion in Figure~\ref{fig:img_comps}. 
Overall, our method yields recombined images that are more physically plausible and visually coherent.
In contrast, Decomp Diffusion can exhibit off-manifold artifacts under recombination; for example, in \textbf{Falcor3D}, wall textures are inconsistently overlaid, suggesting that global scene factors such as layout and appearance are not coherently composed.

Figure \ref{fig:failed_disentanglement} illustrates a representative failure mode of Decomp Diffusion under latent recombination. While the baseline produces visually reasonable samples when generating from a single latent source, recombining latent factors from multiple images often leads to inconsistent global structure. In particular, in Falcor3D, camera and layout information are entangled across multiple latent dimensions, causing wall textures and scene geometry to be inconsistently overlaid when latents are mixed. This highlights the lack of an explicit mechanism in the baseline to enforce compositional consistency across global scene factors.



\begin{figure}[hbt!]
    \centering
    \begin{subfigure}{0.95\columnwidth}
        \centering
        \includegraphics[width=\linewidth]{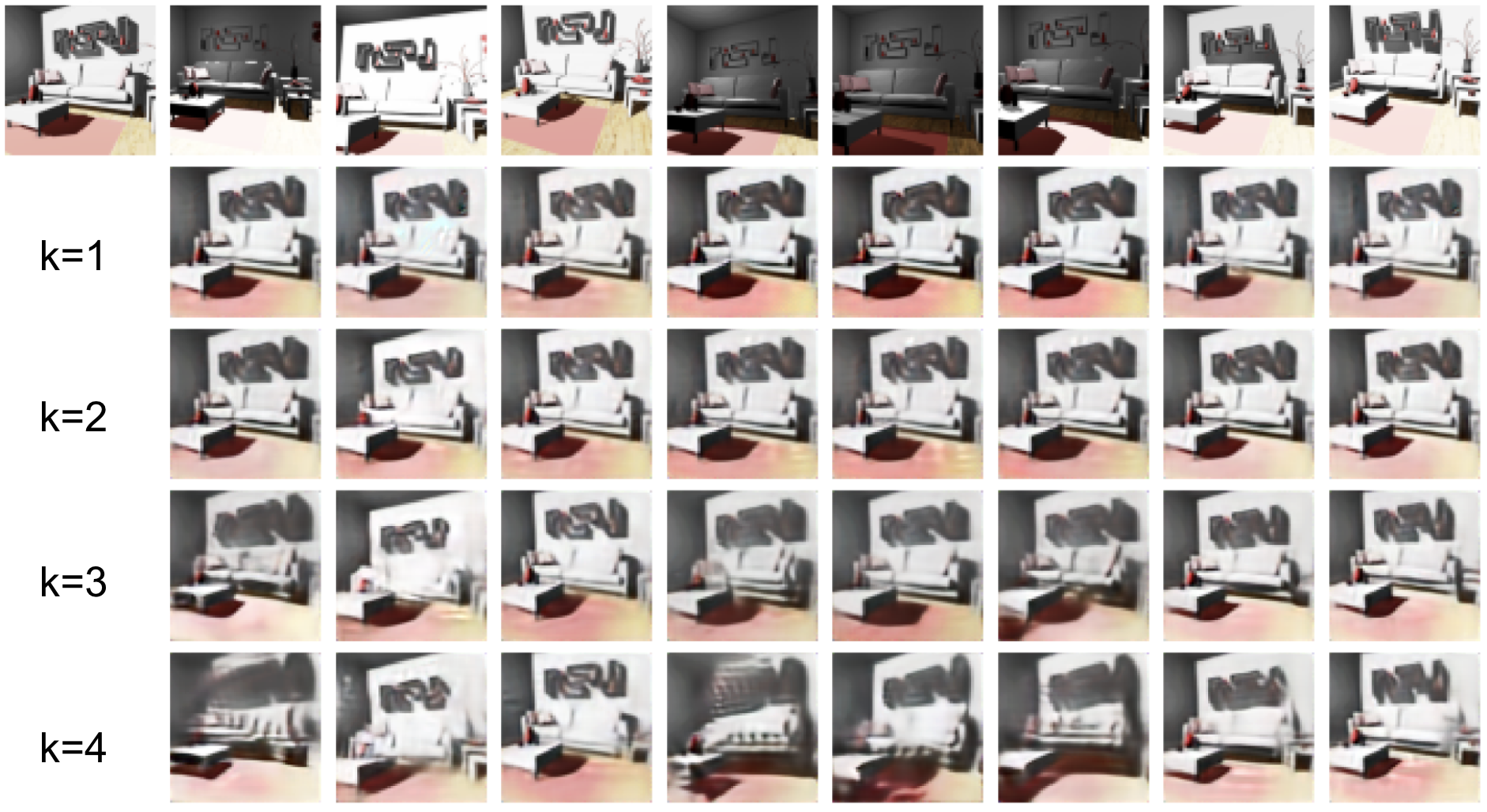}
        \caption{COMET}
        \label{fig:fid_falcor3dv3}
    \end{subfigure}
    \par\vspace{2.0em}
    \begin{subfigure}{0.95\columnwidth}
        \centering
        \includegraphics[width=\linewidth]{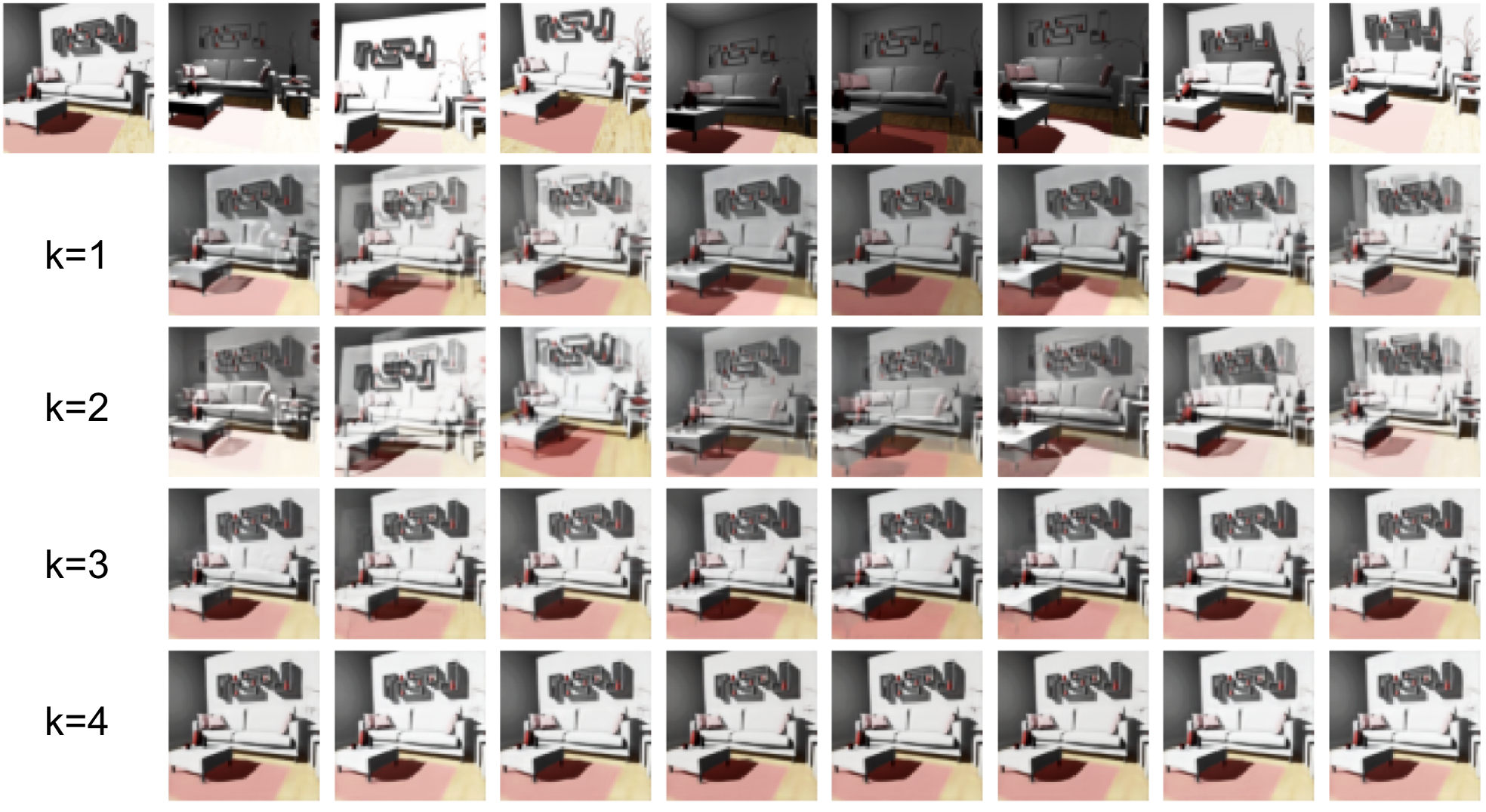}
        \caption{Decomp Diffusion}
        \label{fig:comet_factors}
    \end{subfigure}
    \caption{
    \textbf{Swapping individual latent components across images.}
In each of the subfigures, we fix a reference image $A$ (top left) and a set of other images (top). Below,
each row shows the result of replacing a single latent component of $A$
with the corresponding component from another image, while keeping all
remaining components fixed. The results illustrate how changing individual latent components affects specific aspects of the generated image. The COMET and Decomp Diffusion subfigures exhibit recombinations that are physically incoherent. See Figure~\ref{fig:our_factors} for a visual comparison illustrating
how our method behaves under the same latent component swaps.
    }
    \label{fig:old_factors}
\end{figure}

\begin{figure}[t]
    \centering
    \includegraphics[width=0.95\columnwidth]{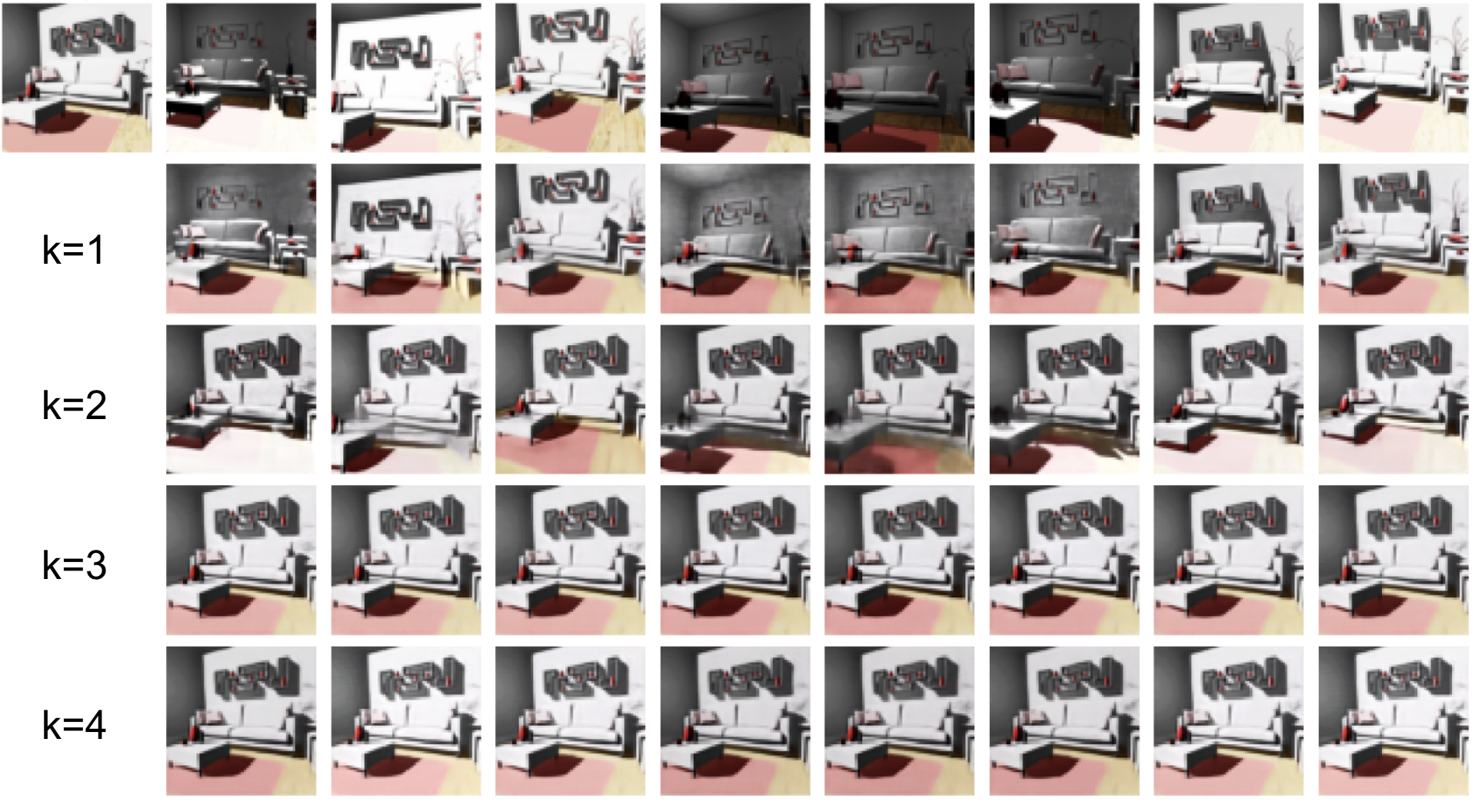}
    \caption{
    \textbf{Latent component swaps for our method.}
    Same visualization protocol as Figure~\ref{fig:old_factors}. In contrast, our recombinations remain physically coherent, with lighting
and camera-related factors varying independently without introducing
geometric or illumination inconsistencies.
    }
    \label{fig:our_factors}
\end{figure}

\begin{figure}[hbt!]
    \centering
    \begin{subfigure}{0.48\columnwidth}
        \centering
        \includegraphics[width=\linewidth]{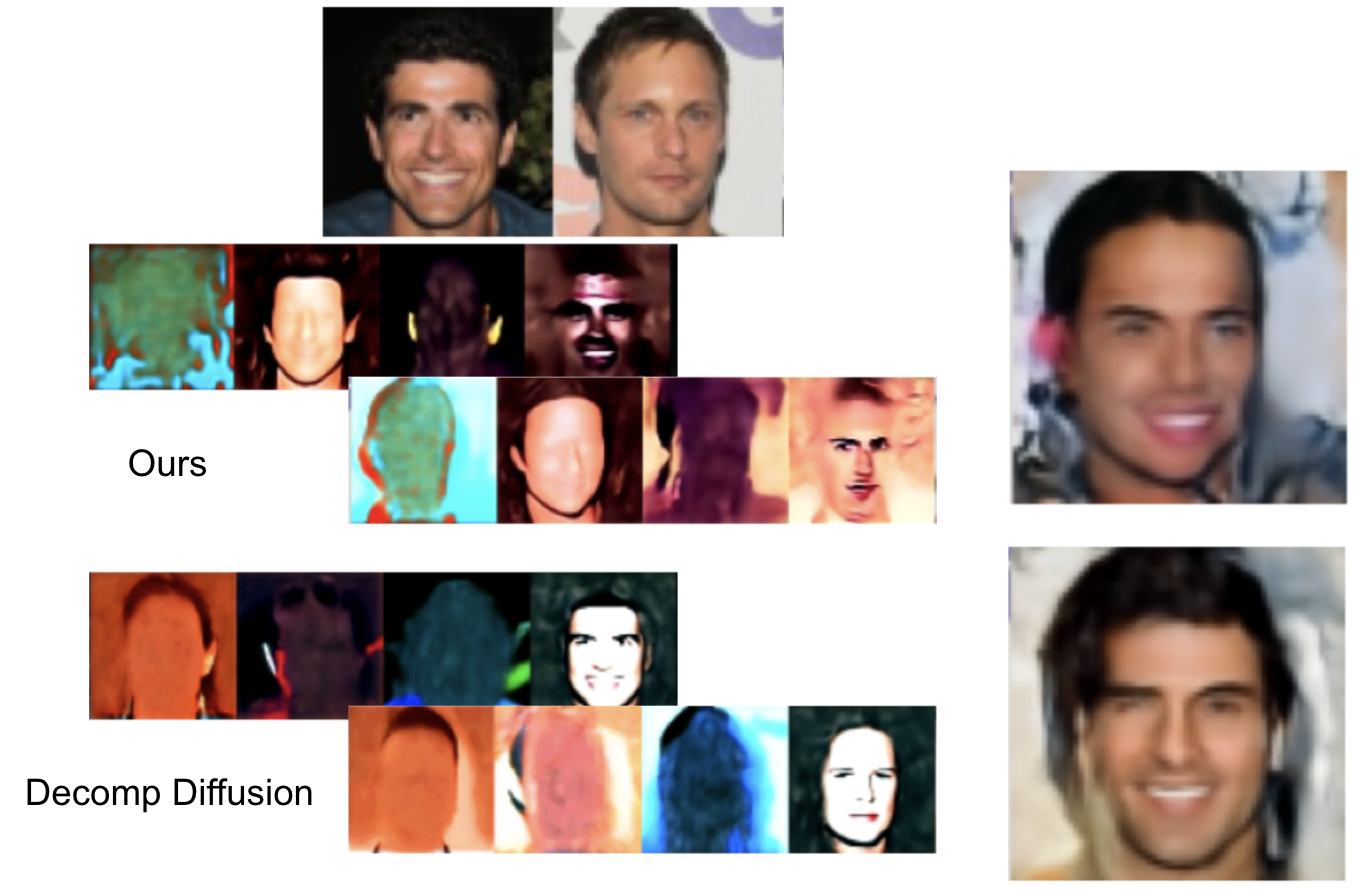}
        \caption{}
        \label{fig:fid_celebahq}
    \end{subfigure}
    \hfill
    \begin{subfigure}{0.48\columnwidth}
        \centering
        \includegraphics[width=\linewidth]{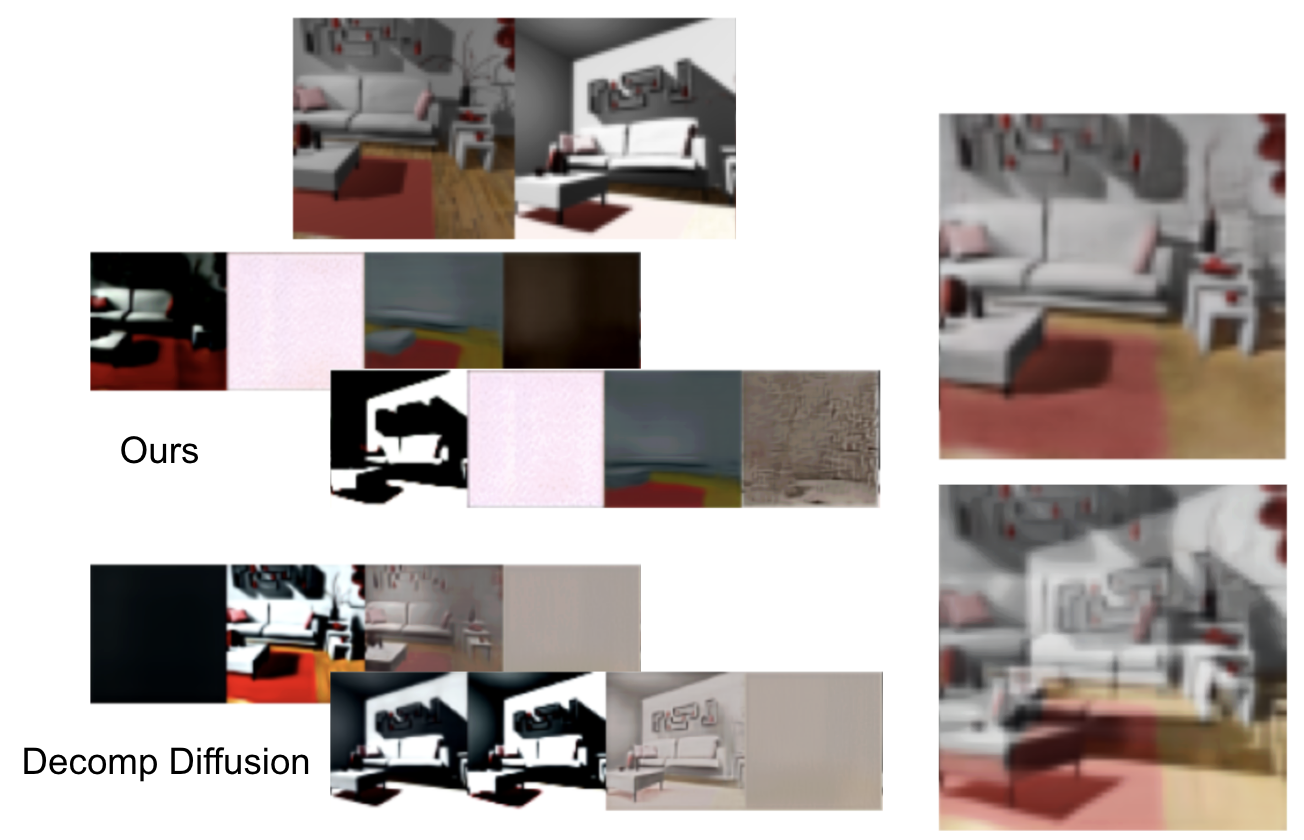}
        \caption{}
        \label{fig:fid_falcor3dv2}
    \end{subfigure}
    \caption{
    The unsupervised training of the baseline compositional model lacks an explicit mechanism to encourage compositional generalization. Recombining latent factors from two source images can therefore produce visually and physically inconsistent artifacts. For example, in (b), camera-related information is distributed across multiple latents, leading to inconsistencies under recombination.
    }
    \label{fig:failed_disentanglement}
\end{figure}

\section{Video Planning}
\subsection{Video Training Details} The model comprises a 3D convolutional video encoder and a latent-conditioned 3D UNet
diffusion model. The video encoder consists of an initial $3\times3\times3$ convolution
followed by $3$ encoding stages. Each stage applies a residual 3D convolutional block
and performs spatial downsampling via a strided $3\times3\times3$ convolution with stride
$(1,2,2)$, doubling the channel width at each stage. Starting from $64$ channels, the
encoder increases the channel dimension to $512$ while preserving the temporal dimension.
The resulting feature map is mean-pooled over time and flattened, then projected by a
fully connected layer to a latent vector of dimension $K \cdot d$, where $K$ is the number
of latent components.

Video generation is performed by a 3D UNet diffusion model with channel multipliers
$(1,2,4)$ and one residual block per resolution level. The UNet operates on inputs of
shape $(B, C, F, H, W)$ and includes attention blocks at selected spatial resolutions.
Diffusion timestep embeddings are computed via sinusoidal embeddings followed by a
two-layer MLP. The latent vector is reshaped into $K$ components and concatenated with
the timestep embedding, and this combined embedding is injected into every residual block
of the UNet. For compositional decoding, the input and conditioning embeddings are
replicated across components, denoising predictions are computed independently per
component, and the final output is obtained by averaging predictions across components.
First-frame conditioning is implemented by concatenating the initial RGB frame across the
temporal dimension at the UNet input.

\begin{figure}[hbt!]
    \centering
     \includegraphics[width=0.9\columnwidth]{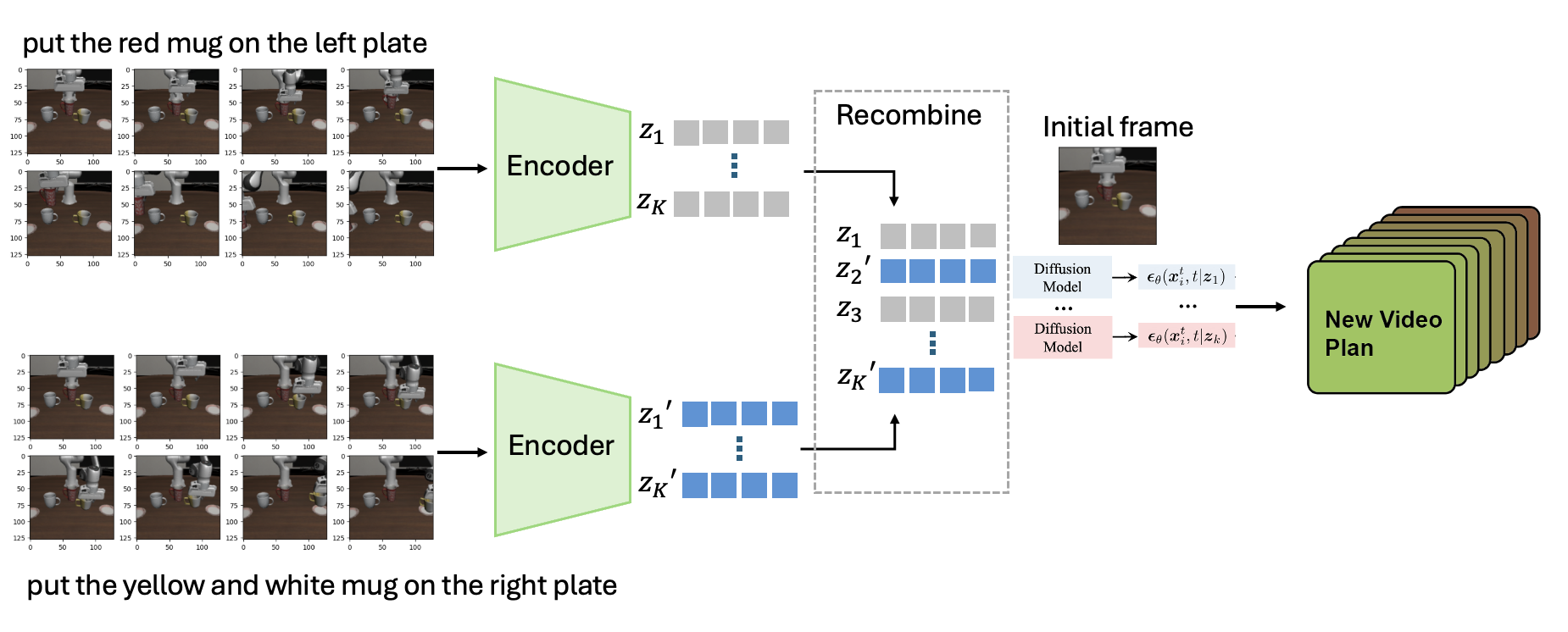}  
     \caption{Demonstration videos are encoded into factorized latent components. Latents corresponding to different action primitives or object interactions are recombined, then decoded by a first-frame-conditioned diffusion model to generate novel, task-consistent video plans.}
    \label{fig:video_arch}
\end{figure}

\subsection{Discriminator Architecture Details}

The discriminator model is a convolutional neural network designed for processing temporally stacked image frames. It employs a series of pseudo-3D convolutional layers that decompose spatiotemporal feature extraction into separate spatial and temporal convolutions. Each convolutional block consists of a spatial \texttt{Conv2D} operation followed by a temporal \texttt{Conv1D} operation, enabling efficient modeling of both spatial structures and temporal dependencies.

The input to the model is a sequence of $F=7$ frames, each with $C=3$ color channels, forming a five-dimensional tensor of shape $(B, C, F, H, W)$. The frames are processed by four convolutional stages, where the number of feature channels increases progressively from 64 to 512. Each stage applies a spatial convolution with a $3 \times 3$ kernel and a stride of $(1,2,2)$, reducing the spatial resolution while maintaining the temporal dimension. The spatial convolutions are followed by temporal \texttt{Conv1D} operations with a kernel size of 3, capturing temporal dependencies across adjacent frames.

After the final convolutional stage, the feature map has $512$ channels and a reduced spatial size of $(H/16, W/16)$. The output is then flattened and passed through a fully connected layer with $512 \times 7 \times (H/16) \times (W/16)$ input dimensions, mapping to a single scalar output. The model does not use batch normalization and employs \texttt{LeakyReLU} activations ($\alpha=0.2$) after each convolutional layer. The final output represents a classification score, which can be used for adversarial training or other video-based tasks.

\subsection{Libero}

\paragraph{Scene 5 and 6.} 
\label{appendix:libero}
Scene 5 contains mug placement tasks in which the robot is instructed to place a mug onto a specified plate. The task variants include: ``Put red mug on left plate", ``Put red mug on right plate", ``Put the white mug on the left plate", and ``Put the yellow and white mug on the right plate". Each task differs only in the target plate location and/or the mug's appearance, while the overall manipulation structure remains the same.

Scene 6 contains object placement tasks involving a plate as a reference target. The task variants include: ``Put chocolate pudding to left of plate", ``Put chocolate pudding to right of plate", ``Put the red mug on the plate", and ``Put the white mug on the plate". These tasks differ in object type and spatial relation relative to the plate.

Given a generated video sequence \( \hat{X} = (\hat{x}_1, \hat{x}_2, \dots, \hat{x}_N) \), we extract a corresponding action sequences
$a_t = \pi_{\text{diff}}(\hat{x}_t, \hat{x}_{t+1})$, for all $t \in \{1, \dots, N-1\}$ where \( \pi_{\text{diff}} \) is the trained diffusion policy, and \( a_t \) is the extracted actions between consecutive frames. We evaluate the quality of generated robotic video rollouts along two axes: physical consistency with the environment dynamics and diversity of explored behaviors.
To assess physical realism, we measure the discrepancy between executed vs generated frames after executing inferred actions. These actions are executed in the Libero environment, producing an actual execution trajectory \( X^E = (x^E_1, x^E_2, \dots, x^E_N) \).

For each generated rollout, we apply an inverse dynamics or action-inference model to recover a sequence of actions from the generated frames, and compute a loss $L_{\text{phys}}$ between the generated frames and the actual frames generated by these inferred actions (which we get from running the simulator). We use the pretrained diffusion policy model (inverse dynamics) from the AVDC codebase, trained on LIBERO Scenes 5–6 demonstration data, and keep it fixed across all methods. Architecturally, $\pi_{\mathrm{diff}}$ encodes the RGB observation with an adapted ResNet-18 with group normalization and spatial softmax pooling, and uses a 1D-convolutional FiLM U-Net diffusion backbone conditioned on the observation embedding \cite{avdc}. The loss between generated and executed frames is $L_{\text{phys}} = \frac{1}{N} \sum_{t=1}^{N} \| x^E_t - \hat{x}_t \|_2^2$ where \( x^E_t \) is the real frame from the Libero execution, \( \hat{x}_t \) is the generated recombined frame, and \( N \) is the total number of frames. 
Lower values of $L_{\text{phys}}$ indicate better adherence to the underlying physical dynamics.
To quantify exploration breadth, we measure coverage in discretized joint-state space. We partition the continuous robot state into bins of resolution $\Delta$ and compute the number of bins visited by generated rollouts.
The resulting metric $\mathrm{Cov}_{\Delta}$ captures the diversity of reachable states induced by recombination, with higher values indicating broader exploration. Unless otherwise stated, all hyperparameters (including bin resolution $\Delta$ and action inference models) are held fixed across methods.\looseness=-1

\subsection{Additional Results}
\label{appendix:video-results}

\begin{figure*}[ht]
    \centering
\includegraphics[width=0.9\textwidth]{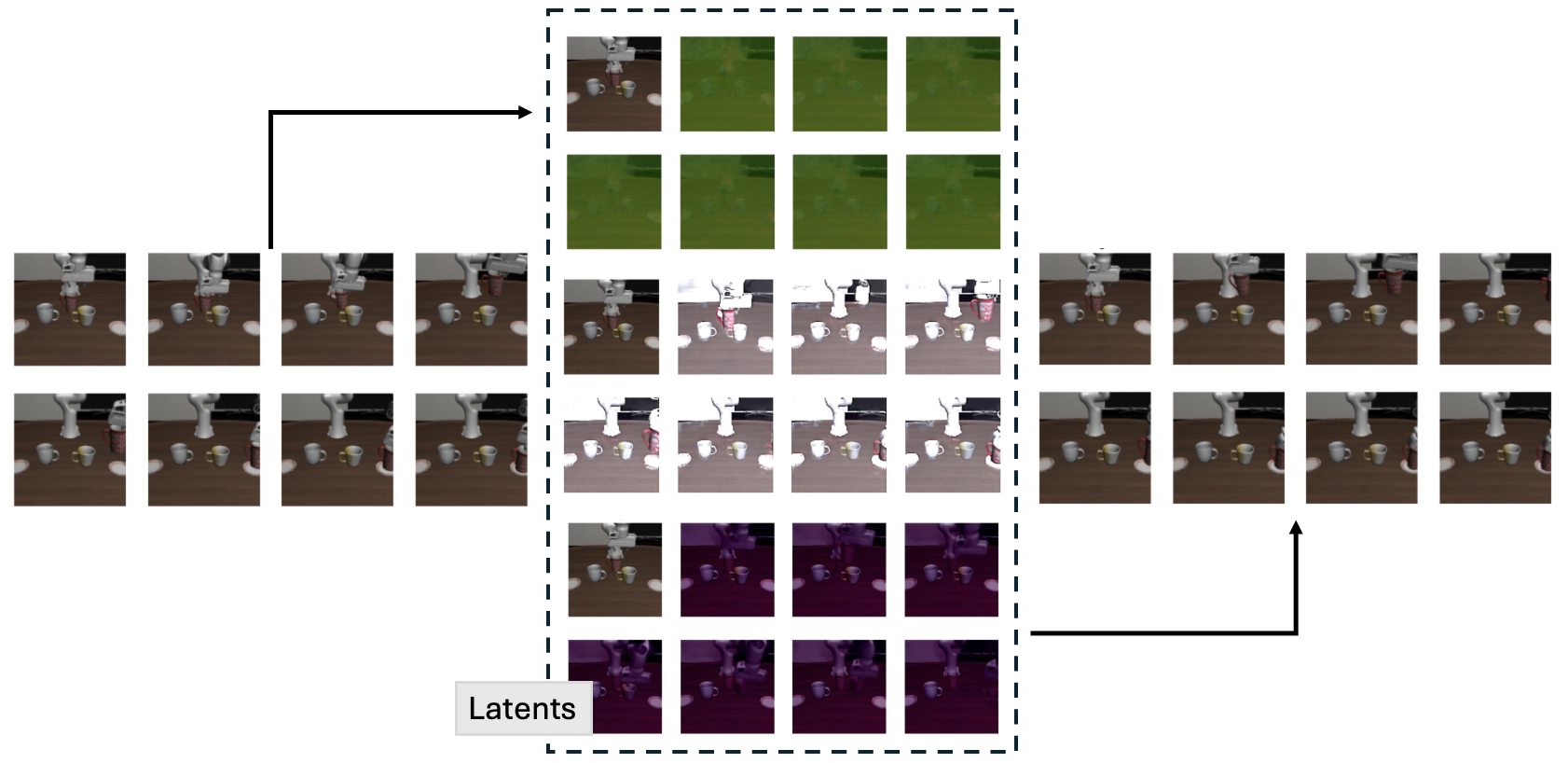} 
    \caption{$\lambda = 0.003$: Good and meaningful representations. Reconstruction quality is fair. See decomposed components and reconstruction quality for large gammas in Figure  \ref{fig:gamma_0.1_decomposition}.} 
\label{fig:gamma_0.003_decomposition}
\end{figure*}

\begin{figure}[hbt!]
    \centering
    \includegraphics[width=0.4\columnwidth]{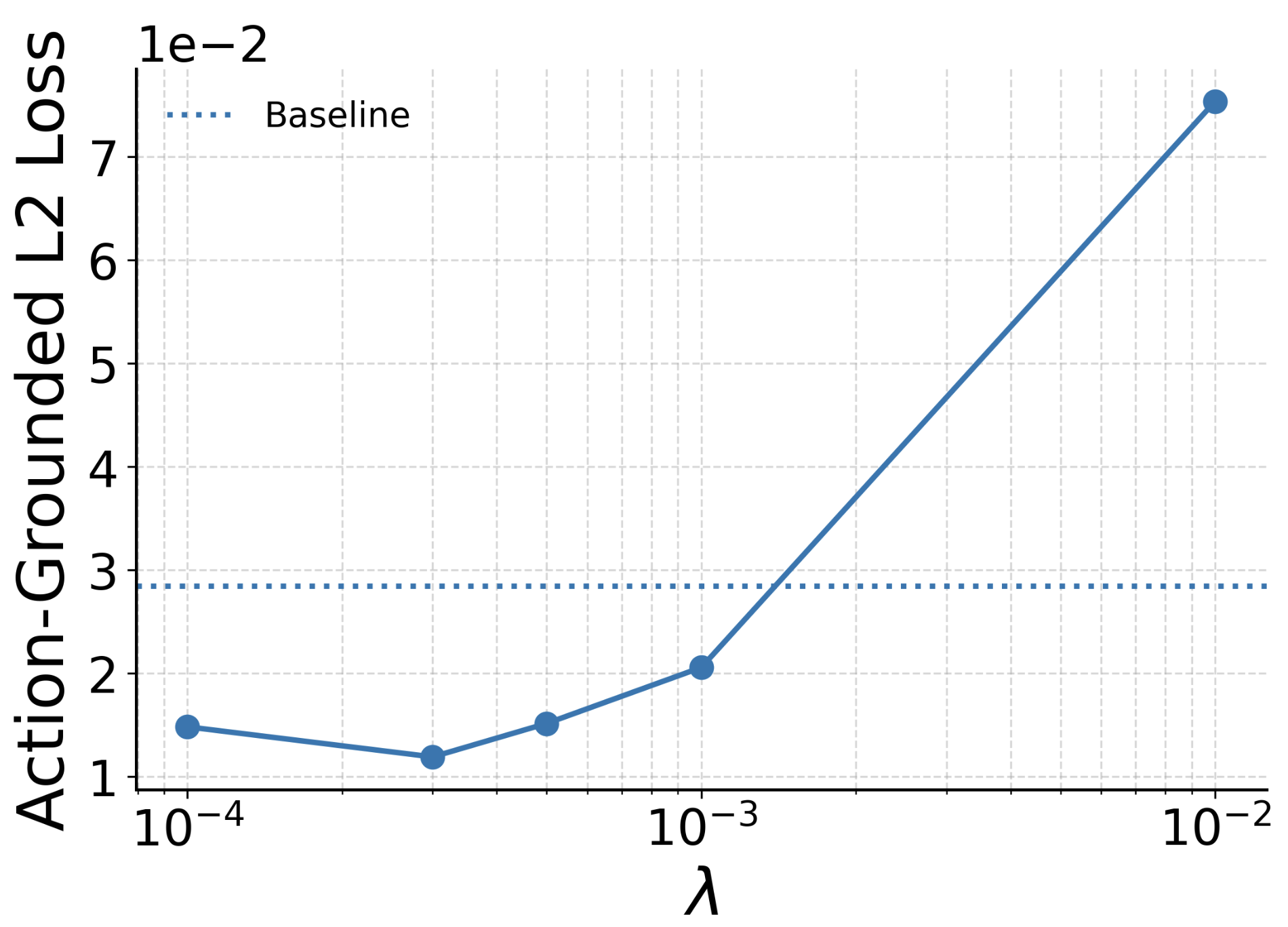} 
    \caption{Effect of discriminator weight strength on Action-Grounded L2 Loss. The loss initially decreases as $\lambda$ increases, reaching an optimal value before rising again, indicating a trade-off between classification guidance and physical realism.
    }
    \label{fig:l2vslambda}
\end{figure}


\begin{figure}[hbt!]
    \begin{subfigure}{\textwidth}
        \centering
        \includegraphics[width=0.9\textwidth]{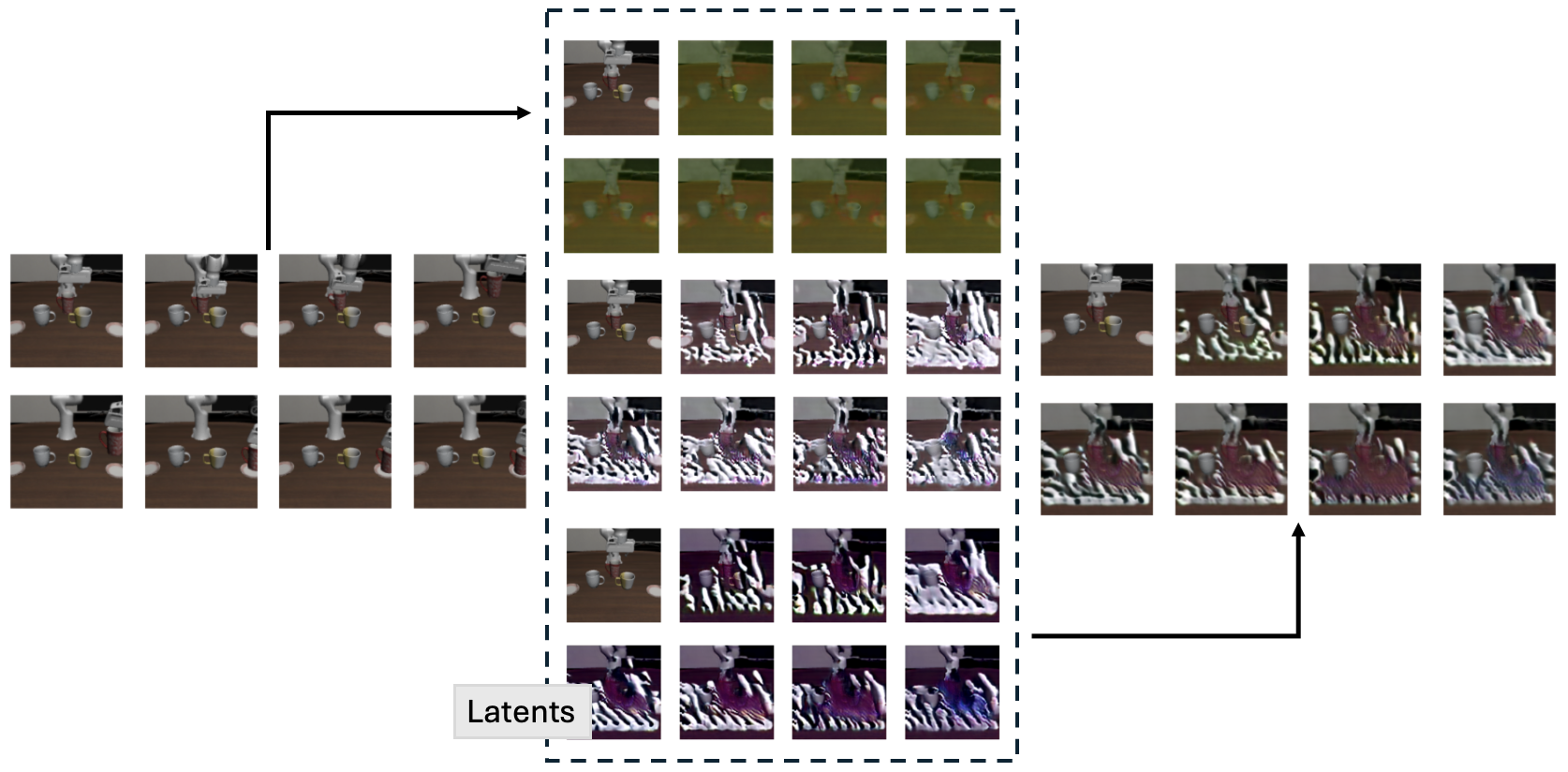} 
        \caption{$\lambda = 0.01$: Decomposed components lose meaning.}
        \label{fig:gamma_0.01_decomposition}
    \end{subfigure}
    \clearpage
    \begin{subfigure}{\textwidth}
        \centering
        \includegraphics[width=0.9\textwidth]{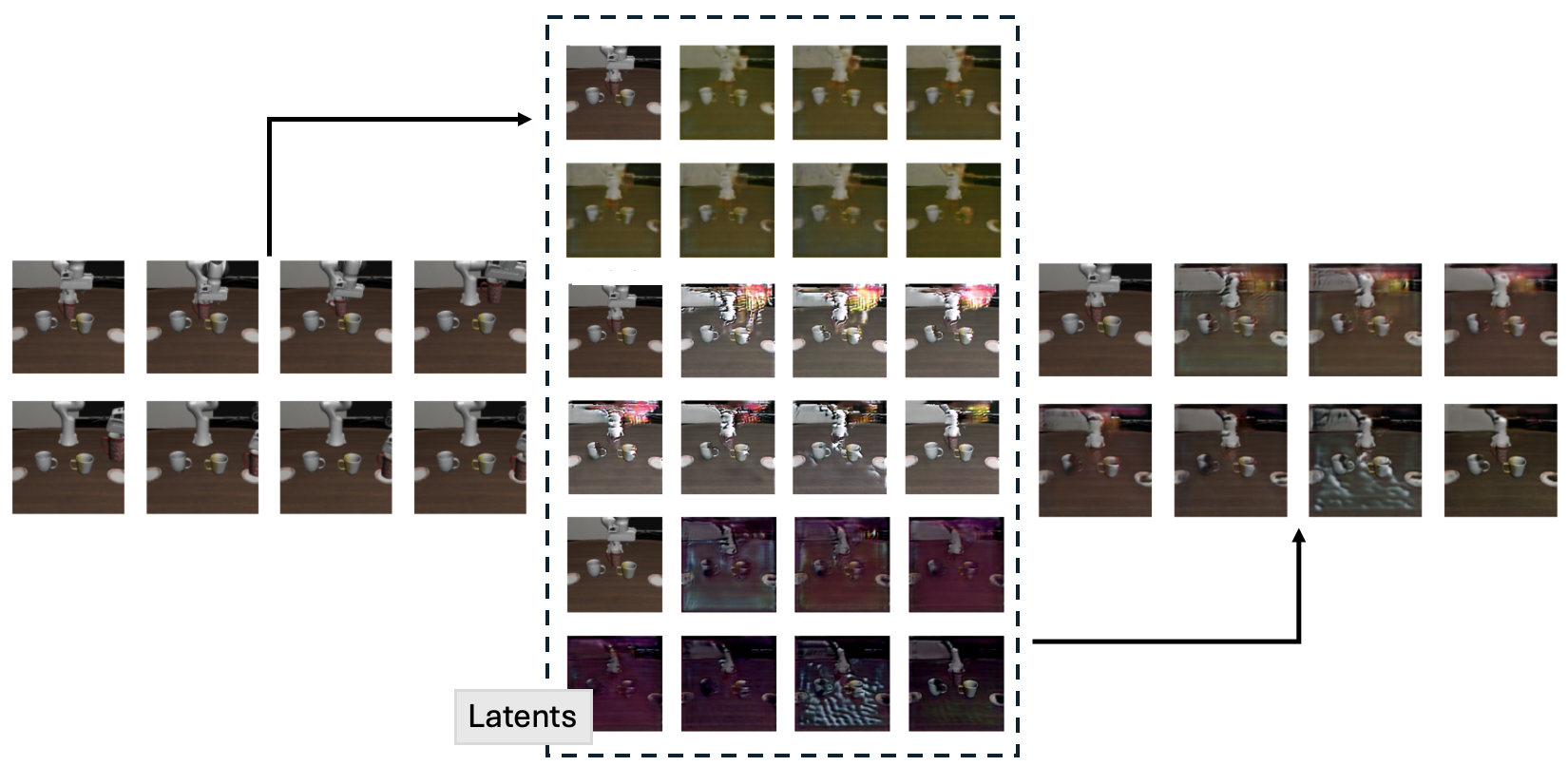} 
        \caption{$\lambda= 0.1$: Decomposed components become noisier.}
        \label{fig:gamma_0.1_decomposition}
    \end{subfigure}
    
    \caption{Decomposition and Reconstruction Quality for different gamma values. For large gammas, the decomposition degrades, leading to noisier reconstructed video plans.}
    \label{fig:gamma_comparison}
\end{figure}

\begin{table}[hbt!]
\centering
\begin{tabular}{lcccccccc}
\toprule
\textbf{Model} 
 & \textbf{Scene5} $\uparrow$ & \textbf{Scene6} $\uparrow$ \\
\midrule
Dataset & 1603 & 1759 \\
Baseline & 9440 & 5400 \\
Baseline + Discriminator ($\lambda$=0.0001) & 9677 & 6994 \\
Baseline + Discriminator ($\lambda$=0.001) & \textbf{12816} & \textbf{9385} \\
Baseline + Discriminator ($\lambda$=0.01) & 7939 & 6358 \\
\bottomrule
\end{tabular}
\caption{Comparison of state-space explored through discretization length = 0.03 from 640 generations. Here, baseline refers to training without discriminator feedback.}
\label{table:state-space}
\end{table}

\subsection{Qualitative assessment}
In Figure \ref{fig:recombine}, we present the decomposition of an input sequence of video frames using a compositional diffusion generative model fine-tuned with a discriminator. By isolating individual components, the model effectively learns to conditionally disentangle distinct concepts based on the initial frame. When analyzing the visualizations of the latent components, we infer that the first latent primarily encodes the static elements of the scene—namely, the two plates and three mugs—while rendering the robot arm indistinct and blurry. The second latent component, in contrast, predominantly whitens the entire scene, selectively preserving only the color of the object being manipulated by the robot arm. The third latent component, beyond capturing variations in shadowing and color, accentuates the dynamic aspects of the scene, particularly tracing the trajectory of the robot arm and its direct interaction with the manipulated object, while diminishing emphasis on the surrounding environment.

\begin{figure}[hb!]
    \centering
    \includegraphics[width=0.9\columnwidth]{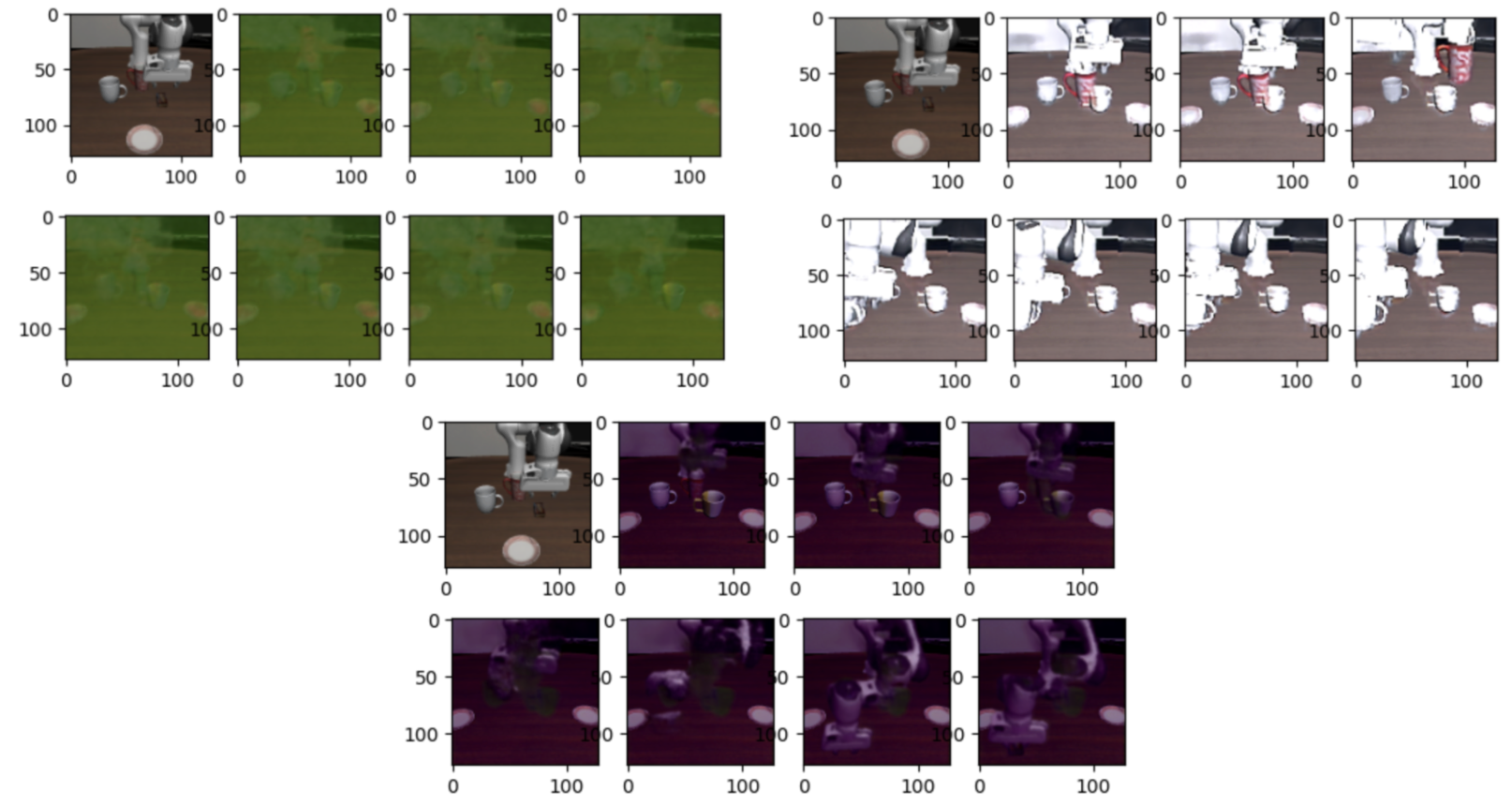} 
    \caption{Visualizations of latent factors for the task 'put the red mug on the left plate'. Each row corresponds to a different recombined trajectory, and columns show selected frames from the rollout. Our interpretation is as follows -- (top left): the robot arm is not visible and only the object configuration is shown. (top right): beginning and end of the executed trajectory. (bottom): intermediate frames from the middle of the trajectory.}
    \label{fig:visualization of latents}
\end{figure}

\begin{figure}[hb!]
    \centering
    \includegraphics[width=0.8\columnwidth]{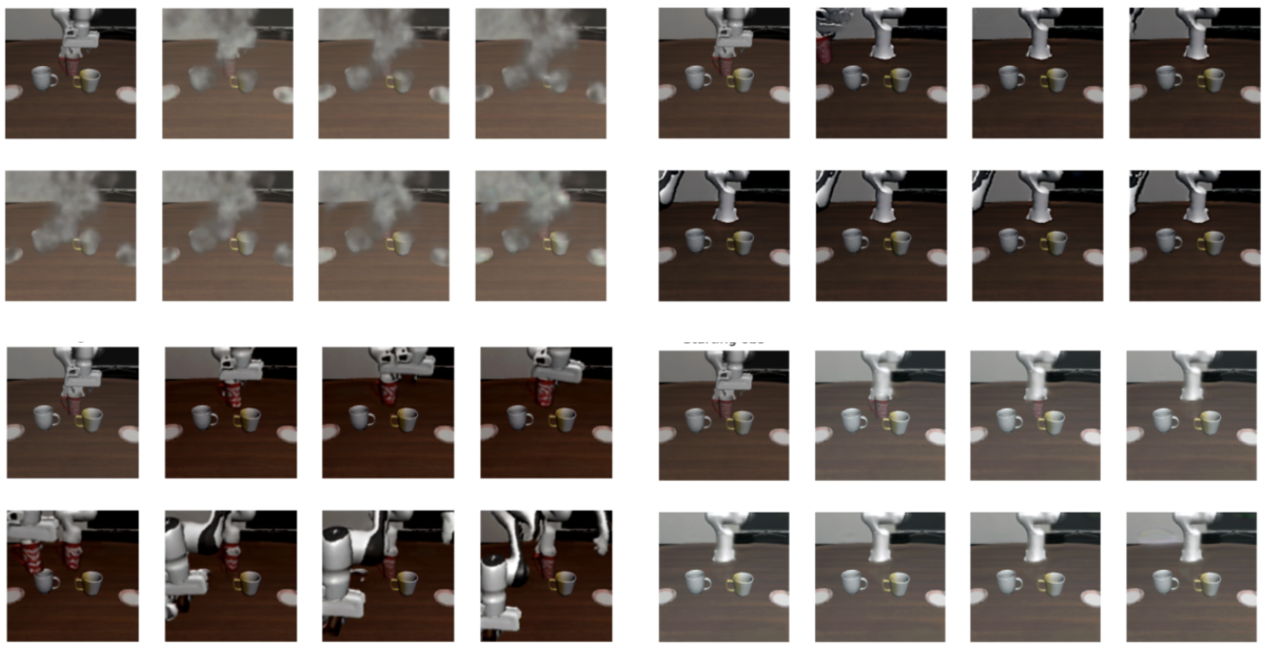} 
    \caption{Visualizations of latents from model trained on four factors. The top right and bottom left emphasize actions near the initial and final interactions between the robotic arm and the red mug. The bottom-right frame primarily captures the static configuration of objects that remain untouched during execution. The top right frame highlights intermediate stages of the trajectory, where the robot arm is actively manipulating the red mug, while background objects are less prominently represented.}
    \label{fig:visualization of latents part 2}
\end{figure}

\newpage

\subsection{State-space coverage and the inverted-U effect}
\label{app:coverage_u_shape}


We discretize executed outcomes into a finite set of state bins $\mathcal{B}$ and let $q_\lambda(b)$ denote the
probability that an executed rollout falls in bin $b$ when sampling is performed with discriminator weight $\lambda$.
If we execute $n$ i.i.d.\ rollouts, the expected number of distinct bins visited is
\begin{equation}
\mathbb{E}[\mathrm{Cov}_n]
\;=\;
\sum_{b\in\mathcal{B}}
\Big(1-\big(1-q_\lambda(b)\big)^n\Big).
\label{eq:expected_coverage_simple}
\end{equation}
For $n\ge 2$, the function $u\mapsto 1-(1-u)^n$ is increasing and concave on $[0,1]$, so
\eqref{eq:expected_coverage_simple} is larger when $q_\lambda$ spreads mass across many bins and smaller when
$q_\lambda$ concentrates on a few.

To relate this to discriminator feedback, partition $\mathcal{B}=\mathcal{F}\cup\mathcal{S}$ into a small set of
\emph{failure} bins $\mathcal{F}$ (e.g., resets/timeouts/trivial end states) and a typically much larger set of
\emph{successful/executable} bins $\mathcal{S}$. In practice, unguided recombination often produces many off-manifold
videos that, after action extraction and execution, collapse into a small number of failures in $\mathcal{F}$.
As $\lambda$ increases from $0$, guidance preferentially suppresses these off-manifold generations, shifting mass in
$q_\lambda$ away from the low-cardinality failure set and into the diverse successful set. This ``filtering'' effect
tends to increase \eqref{eq:expected_coverage_simple} at small-to-moderate $\lambda$.\looseness=-1

For overly large $\lambda$, the sampling distribution can become sharply concentrated on a small set of very
low-energy (``safe''/repetitive) videos. Execution then concentrates $q_\lambda$ on only a few bins inside
$\mathcal{S}$, decreasing \eqref{eq:expected_coverage_simple} by concavity. The combination of initial filtering
(less mass on failures) and eventual concentration among successes yields an inverted-U dependence of coverage on
$\lambda$, with a maximum at an intermediate discriminator weight.

\section{A Theory of Recombination}
\label{appendix:recomb}

\subsection{Additive Energies of Features is Equivalent to Product of Experts}
\label{appendxix: additive energies of features is equivalent to product of experts}

We begin by formalizing a standard equivalence used throughout compositional energy-based modeling. An energy-based model (EBM) defines an unnormalized density $p(x) \propto \exp(-E(x))$ such that $p(x) = \frac{\exp(-E(x))}{Z}$, where $Z=\int \exp(-E(x))\mathrm dx$ denotes the normalizing constant (or the partition function) and $E(x)$ (the energy) is a nonlinear regression function with some latent parameter $\theta$ \cite{lecun2006tutorial, hinton2002training,song2021trainenergybasedmodels, du2020compositional}. We show that additive energies yield a product-of-experts distribution. \nocite{NEURIPS2023_a7a7180f}.

A product-of-experts (PoE) model combines $K$ non-negative expert potentials $\{\tilde{p}_k(x)\}_{k=1}^K$ over the same variable $x\in\mathcal{X}$, via $p_{PoE}(x) \coloneqq \frac1Z \prod_{k=1}^K \tilde{p}_k(x)$, where $Z=\int_{\mathcal{X}}\prod_{k=1}^K \tilde{p}_k(u) \mathrm du$ is the partition function. We show that our discriminator-based feedback can be interpreted as shaping the experts so that recombined expert sets still yield sufficient probability mass in the PoE intersection, which improves the fidelity of the recombination.

\begin{lemma}
    [Additive energies form a PoE distribution]
    Let $\{E_k(x)\}_{k=1}^K$  be a collection of energy functions and $\{\alpha_k\}_{k=1}^K$ be a collection of non-negative coefficients. Define the weighted aggregate energy $E(x) \coloneqq \sum_{k=1}^K \alpha_k E_k(x)$. Next, for $i\in\{1,\dots,K\}$, we define the sequence of unnormalized densities $\tilde{p}_i(x) = \exp(-E_i(x))$. Then, the induced aggregate model is a product of experts $\tilde{p}(x) =  \prod_{k=1}^K \tilde{p}_k(x)^{\alpha_k}$.
\end{lemma}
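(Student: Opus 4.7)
The plan is to observe that this lemma is essentially the multiplicative property of the exponential function applied to a weighted sum, and to carry out the identification of terms carefully while being explicit about the role of the partition function. Since the aggregate energy is defined as a linear combination of the constituent energies, exponentiating it immediately factors the unnormalized density into a product of per-expert contributions.

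Concretely, I would proceed as follows. First, form the unnormalized aggregate density $\tilde{p}(x) \coloneqq \exp(-E(x))$ associated with the aggregate energy $E(x) = \sum_{k=1}^K \alpha_k E_k(x)$. Next, apply the identity $\exp(a+b) = \exp(a)\exp(b)$ inductively to obtain
\begin{equation*}
\tilde{p}(x) = \exp\!\Bigl(-\sum_{k=1}^K \alpha_k E_k(x)\Bigr) = \prod_{k=1}^K \exp(-\alpha_k E_k(x)) = \prod_{k=1}^K \bigl(\exp(-E_k(x))\bigr)^{\alpha_k}.
\end{equation*}
Finally, use the definition $\tilde{p}_k(x) = \exp(-E_k(x))$ to rewrite the right-hand side as $\prod_{k=1}^K \tilde{p}_k(x)^{\alpha_k}$, which is exactly the unnormalized PoE form. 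The normalized distribution is then $p(x) = \tilde p(x)/Z$ with $Z = \int_{\mathcal X}\prod_{k=1}^K \tilde{p}_k(u)^{\alpha_k}\,\mathrm d u$, matching the PoE definition stated earlier in the appendix.

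There is no substantive obstacle here: the statement is an algebraic identity, so the only thing to be careful about is bookkeeping. In particular, I would explicitly note the two mild regularity conditions implicit in the statement: each $\tilde{p}_k$ must be non-negative (automatic since it is an exponential) and the partition function $Z$ must be finite for the normalized PoE to be well-defined. Both are satisfied whenever $E_k$ are real-valued and $\int \exp(-\sum_k \alpha_k E_k(x))\,\mathrm{d}x < \infty$, which is implicit in the assumption that the aggregate EBM defines a proper density. With these remarks, the proof is complete.
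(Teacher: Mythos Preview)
Your proposal is correct and follows essentially the same approach as the paper: both proofs apply the multiplicative property of the exponential to factor $\exp(-\sum_k \alpha_k E_k(x))$ into $\prod_k \tilde p_k(x)^{\alpha_k}$, then address normalization. The paper additionally spells out how the per-expert partition functions $Z_k^{\alpha_k}$ absorb into the global $Z$, while you instead note the finiteness condition on $Z$; these are minor differences in emphasis rather than substance.
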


\begin{proof}Note that
\begin{align*}
\tilde p(x) &= \exp\Big(-\sum_{k=1}^K \alpha_k E_k(x)\Big) \\
&= \prod_{k=1}^K \exp(-\alpha_k E_k(x))
\\
&= \prod_{k=1}^K \tilde p_k(x)^{\alpha_k}.
\end{align*}
To show that $\tilde{p}(x)$ is a realizable unnormalized density, for $k\in[K]$ let $p_k$ be the normalized density corresponding to $\tilde{p}_k$ such that $p_k = \tilde{p}_k(x) / \int \tilde{p}_k(x) \mathrm dx \coloneqq \tilde{p}_k(x)/Z_k$ where $Z_k$ is the partition function of $\tilde{p}_k$. Then, $\tilde{p}_k^{\alpha_k} = (Z_k p_k)^{\alpha_k} = Z_k^{\alpha_k} p_k^{\alpha_k}$. Then, the constants $\prod_{k=1}^K Z_k^{\alpha_k}$ are absorbed into the partition function $Z = \int \prod_k \tilde{p}_k(x)^{\alpha_k}\mathrm dx$, yielding \[p(x) = \frac{\prod_{k=1}^K \tilde{p}_k(x)^{\alpha_k}}{\prod_{k=1}^K Z_k^{\alpha_k}} = \frac{\prod_{k=1}^K \tilde{p}_k(x)^{\alpha_k}}{Z},\]
which proves the lemma.\qedhere\\
\end{proof}

\begin{remark}
    [Naïve recombination artifacts]
    A PoE assigns high probability only to points that satisfy all experts simultaneously. If two experts encode incompatible constraints (for instance, if the camera pose information is split across multiple components) their product can assign near-zero density to the recombined configuration, which introduces artifacts in the recombination. This matches the empirical observation that naïve recombination can create inconsistent samples.\\
\end{remark}

\begin{remark}
    [Additive diffusion] If each expert induces a differentiable density $p_k(x)$, then the PoE score decomposes additively
    $\nabla_x \log p(x) \coloneqq \sum_{k=1}^K \alpha_k \nabla_x \log p_k(x)$ since 
    \[\log p(x) = \sum_k \alpha_k \log p_k(x) + \sum_k \alpha_k \log Z_k - \log Z.\] Therefore, the parameterization in which the denoising network is a sum of per-component contributions can be viewed as learning a PoE over component-conditioned experts.
\end{remark}

\subsection{Recombining Subsets of Latent Representations}
\label{appendix: recombining subsets of latent representations}

We consider the event where arbitrary subset recombinations stay on-manifold, and show that this implies a structural factorization in latent space. To begin, let $\mathcal{M}\subset\mathcal{X}$ denote the data manifold which is the support of $p_{data}$. For $i\in[K]$, let $Z_i \subseteq \mathbb{R}^{d_i}$ denote the latent space corresponding to the $i$'th latent factor. Let $\mathrm{Enc}:\mathcal{M}\to\mathcal{Z}_1\times\cdots\times\mathcal{Z}_K$ be an encoder and $f:\mathcal{Z}_1\times\dots\times\mathcal{Z}_K \to \mathcal X$ be a decoder. Define the set of valid codes $\mathcal{Z}\coloneqq \mathrm{Enc}(\mathcal{M})$ and projections $\pi_k(\mathcal{Z})\subseteq \mathcal{Z}_k$. Next, define the coordinate-wise recombination operator for a mask $S\in\{0,1\}^K$ by \[\mathrm{Mix}_S(z,z')\coloneqq S\odot z + (1-S)\odot z',\]
such that each coordinate is selected from one of the two codes. We then show that closure under coordinate-wise recombination implies that the latent support factorizes as a Cartesian product.

\begin{proposition}
    [Closure under subset recombination implies Cartesian-product support] Assuming  (i) $f(\mathrm{Enc}(x))=x$ for all $x\in\mathcal{M}$ and (ii) for all $z,z' \in\mathcal{Z}$ and all masks $S\in\{0,1\}^K$, $\mathrm{Mix}_S(z,z')\in\mathcal{Z}$,
    we have that the valid code set factorizes as a Cartesian product such that
    $\mathcal{Z} = \pi_1(\mathcal{Z}) \times\cdots\times\pi_K(\mathcal{Z})$. Hence, every recombination of per-component values that appears on the manifold can be recombined without leaving the manifold.
\end{proposition}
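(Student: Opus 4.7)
The plan is to split the claim into the two set inclusions. The forward inclusion $\mathcal{Z} \subseteq \pi_1(\mathcal{Z}) \times \cdots \times \pi_K(\mathcal{Z})$ is immediate from the definition of projection: any $z \in \mathcal{Z}$ satisfies $\pi_k(z) \in \pi_k(\mathcal{Z})$ for every $k$. The substantive content is the reverse inclusion, which I would establish by a one-coordinate-at-a-time induction that uses only assumption (ii) together with the fact that each component value in the product has a witness in $\mathcal{Z}$.

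Concretely, fix $w = (w_1, \ldots, w_K) \in \pi_1(\mathcal{Z}) \times \cdots \times \pi_K(\mathcal{Z})$ and, for each $k$, pick $z^{(k)} \in \mathcal{Z}$ with $(z^{(k)})_k = w_k$ (such a witness exists by definition of $\pi_k(\mathcal{Z})$). Let $e_k \in \{0,1\}^K$ denote the mask that is $1$ at position $k$ and $0$ elsewhere. Define $v^{(0)} = z^{(1)}$ and, for $k = 1, \ldots, K$, set $v^{(k)} = \mathrm{Mix}_{e_k}(z^{(k)}, v^{(k-1)})$. Unwinding the definition $\mathrm{Mix}_S(z,z') = S \odot z + (1-S) \odot z'$ gives $(v^{(k)})_k = (z^{(k)})_k = w_k$ and $(v^{(k)})_j = (v^{(k-1)})_j$ for $j \neq k$. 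By induction on $k$, the first $k$ coordinates of $v^{(k)}$ agree with those of $w$, and by assumption (ii) applied to $z^{(k)}, v^{(k-1)} \in \mathcal{Z}$, each $v^{(k)}$ remains in $\mathcal{Z}$. Iterating up to $k = K$ yields $v^{(K)} = w \in \mathcal{Z}$, which establishes the reverse inclusion.

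The argument is essentially a finite telescoping construction, so the only real points of care are bookkeeping rather than substance. The main thing to be careful about is the mask convention: coordinate $k$ is taken from the first argument of $\mathrm{Mix}$ precisely when $S_k = 1$, so the mask $e_k$ (and not its complement) is what overwrites coordinate $k$ while leaving the others untouched. One must also verify that assumption (ii) applies at every step, which follows because each $v^{(k)}$ is built from two elements already shown to lie in $\mathcal{Z}$ (namely $z^{(k)}$ and, by induction, $v^{(k-1)}$). Finally, there is a trivial edge case: if $\mathcal{Z} = \emptyset$, then every $\pi_k(\mathcal{Z})$ is empty, the product is empty, and the claim holds vacuously. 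Note that assumption (i) is not needed for the structural factorization itself; it only underwrites the downstream interpretation that any element of the factorized latent support decodes to a genuine manifold point.
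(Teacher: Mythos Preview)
Your proof is correct and follows essentially the same route as the paper's: both establish the nontrivial inclusion $\pi_1(\mathcal{Z})\times\cdots\times\pi_K(\mathcal{Z})\subseteq\mathcal{Z}$ by picking a witness $z^{(k)}\in\mathcal{Z}$ for each target coordinate and iteratively overwriting one coordinate at a time using closure under $\mathrm{Mix}$, with the trivial inclusion handled by the definition of the projections. The only differences are cosmetic (you use mask $e_k$ with arguments ordered $(z^{(k)},v^{(k-1)})$ whereas the paper uses the complementary mask with arguments $(\bar z,z^{(k)})$), and your added remarks on the empty case and the non-use of assumption~(i) are accurate refinements that the paper's proof leaves implicit.
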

\begin{proof}
    Let $a_k \in \pi_k(\mathcal{Z})$ be arbitrary. By definition of the projection, for each $k$ there exists a code $z^{(k)}\in\mathcal{Z}$ whose $k$'th coordinate is equal to $a_k$. Initialize $\bar{z}\leftarrow z^{(1)}$. For $k=2,\dots,K$, define a mask $S^{(k)}$ that selects coordinate $k$ from $z^{(k)}$ and all other coordinates from $\bar{z}$. By closure, $\bar{z}\leftarrow \mathrm{Mix}_{S^{(k)}}(\bar{z}, z^{(k)}) \in \mathcal{Z}$ after each step. At the end, $\bar{z}$ contains coordinates $(a_1,\dots,a_K)$ and lies in $\mathcal{Z}$, which gives $\pi_1(\mathcal{Z})\times\cdots\times\pi_K(\mathcal{Z}) \subseteq \mathcal{Z}$. The reverse inclusion $\mathcal{Z}\subseteq \pi_1(\mathcal{Z})\times\cdots\times\pi_K(\mathcal{Z})$ then follows by definition of our projection operators. Together, this proves the proposition.\qedhere\\
\end{proof}

\begin{remark}
    The above formalizes a stronger notion of independent controllability, where each latent component can vary independently on the learned manifold. This is a structural condition that is consistent with higher disentanglement scores on datasets where ground-truth factor space has an approximate product structure.\\
\end{remark}

\subsection{Contraction of Pairwise Mutual Information through Recombination}

We show that random subset recombination reduces statistical dependence between latent components, in the sense that pairwise mutual information contracts by a factor determined by the probability of selecting both components from
the same source sample.
\nocite{anand2025feelgood} 

\begin{definition}
    [Mutual information] For random variables $X$ and $Y$ with joint distribution $p_{X,Y}$ and marginals $p_X$ and $p_Y$ (respectively), the mutual information between $X$ and $Y$ is given by \[I(X;Y) \coloneqq \mathrm{KL}(p_{X,Y}\|p_X p_Y) = \mathbb{E}_{(X,Y)\sim p_{X,Y}} \left[\log \frac{p_{{X,Y}}(X,Y)}{p_X(X) p_Y(Y)}\right].\]
\end{definition}

To proceed, let $Z=(Z_1,\dots,Z_K)\sim P_Z$ denote a random latent code (e.g., the aggregated posterior of $\mathrm{Enc}(X)$) (as defined in \cref{appendix: recombining subsets of latent representations}), and let $Z^{A}, Z^{B}\sim P_Z$ (i.i.d.). Next, let $S\in\{0,1\}^K$ be a random mask
independent of $Z^A,Z^B$. Define the recombined code $\tilde Z$ component-wise as $\tilde Z_k \coloneqq S_k Z^A_k + (1-S_k) Z^B_k$ where $k=1,\dots,K$. 

\begin{lemma}[Pairwise mutual information contracts under recombination]
\label{lemma :mutual information contracts}
Fix two indices $i\neq j$. Let $\alpha_{ij}\coloneqq  \Pr[S_i=S_j]$. Then, we have that \[\mathcal{I}(\tilde Z_i;\tilde Z_j) \leq \alpha_{ij} \cdot \mathcal{I}(Z_i;Z_j).\] 
\end{lemma}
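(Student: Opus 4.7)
The plan is to expose the joint law of $(\tilde Z_i,\tilde Z_j)$ as a convex mixture of the original joint law and the product of marginals, with weights $\alpha_{ij}$ and $1-\alpha_{ij}$ respectively, and then apply convexity of Kullback--Leibler divergence in its first argument. First I would condition on the mask pair $(S_i,S_j)\in\{0,1\}^2$. Since $S$ is independent of $(Z^A,Z^B)$ and $Z^A,Z^B\stackrel{\text{i.i.d.}}{\sim} P_Z$, on each of the events $\{S_i=S_j=1\}$ and $\{S_i=S_j=0\}$ the pair $(\tilde Z_i,\tilde Z_j)$ is read entirely off a single source, so its conditional law is exactly $p_{Z_i,Z_j}$. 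On the complementary events $\{S_i\neq S_j\}$ the two coordinates come from the independent samples $Z^A$ and $Z^B$, so the conditional law factorizes as $p_{Z_i}\otimes p_{Z_j}$.

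Next I would verify that the marginals are preserved, so that the reference measure in the definition of $\mathcal{I}(\tilde Z_i;\tilde Z_j)$ can be written purely in terms of $Z$. Because each $\tilde Z_k$ is a mixture over $S_k$ of the identically distributed variables $Z^A_k$ and $Z^B_k$, we get $\tilde Z_k\stackrel{d}{=}Z_k$, and hence $p_{\tilde Z_i}\otimes p_{\tilde Z_j}=p_{Z_i}\otimes p_{Z_j}$. Combining this observation with the conditional laws from the previous step via the law of total probability yields the mixture identity
$$p_{\tilde Z_i,\tilde Z_j} \;=\; \alpha_{ij}\,p_{Z_i,Z_j} \;+\; (1-\alpha_{ij})\,p_{Z_i}\otimes p_{Z_j}.$$

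Finally I would apply convexity of $q'\mapsto\mathrm{KL}(q'\,\|\,q)$ at $q=p_{Z_i}\otimes p_{Z_j}$ to obtain
$$\mathcal{I}(\tilde Z_i;\tilde Z_j)\;\le\;\alpha_{ij}\,\mathrm{KL}(p_{Z_i,Z_j}\,\|\,p_{Z_i}\otimes p_{Z_j})+(1-\alpha_{ij})\cdot 0\;=\;\alpha_{ij}\,\mathcal{I}(Z_i;Z_j),$$
where the second term vanishes because it is the KL divergence of a measure with itself. The argument is quite short overall; the only substantive step is the bookkeeping that establishes the mixture decomposition together with marginal preservation, after which convexity closes the inequality in a single line. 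I do not anticipate any serious obstacle, though if one wanted to sharpen the bound (e.g., to $\alpha_{ij}^2$ or to an $f$-divergence version) the convexity step would have to be replaced by a tighter analysis of the mixture.
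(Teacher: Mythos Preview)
Your proposal is correct and follows essentially the same approach as the paper: establish the mixture decomposition $P_{\tilde Z_i,\tilde Z_j}=\alpha_{ij}P_{Z_i,Z_j}+(1-\alpha_{ij})P_{Z_i}\otimes P_{Z_j}$ together with marginal preservation, then apply convexity of $\mathrm{KL}(\cdot\,\|\,P_{Z_i}\otimes P_{Z_j})$ in its first argument. Your bookkeeping is slightly more explicit in conditioning on the four mask configurations, but the argument is otherwise identical to the paper's.
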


\begin{proof}
Since $Z^A$ and $Z^B$ are independent, the pair $(\tilde Z_i,\tilde Z_j)$ admits a mixture form where
with probability $\alpha_{ij}$, both coordinates come from the same draw (either $A$ or $B$), yielding the correlated joint
distribution $P_{Z_i,Z_j}$; with probability $1-\alpha_{ij}$ they come from different draws, yielding independence $P_{Z_i}P_{Z_j}$.
Thus, we write
\begin{equation}
P_{\tilde Z_i,\tilde Z_j} = \alpha_{ij} P_{Z_i,Z_j} + (1-\alpha_{ij}) P_{Z_i}P_{Z_j}.
\end{equation}
Since $P_{\tilde Z_i}=P_{Z_i}$ and $P_{\tilde Z_j}=P_{Z_j}$, we can write
\begin{equation}
\mathcal{I}(\tilde Z_i; \tilde Z_j) = \mathrm{KL}\left(P_{\tilde Z_i,\tilde Z_j}\| P_{Z_i}P_{Z_j}\right).
\end{equation}
By convexity of $\mathrm{KL}(\cdot\|\cdot)$ in its first argument,
\begin{align}
\mathcal{I}(\tilde Z_i; \tilde Z_j)
&=\mathrm{KL}\left(\alpha_{ij} P_{Z_i,Z_j} + (1-\alpha_{ij}) P_{Z_i}P_{Z_j} \| P_{Z_i}P_{Z_j}\right)\\
&\leq \alpha_{ij} \cdot \mathrm{KL} \left(P_{Z_i,Z_j}\| P_{Z_i}P_{Z_j}\right)
+ (1-\alpha_{ij}) \cdot \mathrm{KL}\left(P_{Z_i}P_{Z_j}\|P_{Z_i}P_{Z_j}\right)\\
&= \alpha_{ij} \cdot \mathcal{I}(Z_i;Z_j),
\end{align}
which proves the lemma. Therefore, random subset recombination mixes correlated latents with independent latents (from different samples), thereby shrinking dependence. Hence, enforcing that decoded samples from recombined latents remain on-manifold provides an implicit pressure toward latents that are more ``factorial'' (lower dependence), which aligns with common disentanglement objectives.
\qedhere
\end{proof}



\section{Conclusion, Limitations, and Future Work}

\paragraph{Conclusion.} We introduce a discriminator-driven approach to guide compositional learning in both static image recombination and video-based robotic planning. By leveraging a discriminator to distinguish between single-source and recombined data, we demonstrate how its feedback refines latent representations, leading to improved perceptual quality, disentanglement, and exploration efficiency. Our approach extends diffusion-based decomposition methods by incorporating an inductive bias for incentivizing recombined samples to adhere closely to the training data. For video recombination, we applied our method by encoding multi-frame sequences into latent representations while conditioning generation on an initial frame, showing how such an approach can find meaningful latent components from videos, which can be further used to generate new video samples. Our findings highlight the broader potential of incorporating learned discriminators as implicit regularizers in generative modeling.\looseness=-1

\paragraph{Limitations.}
A limitation of our current implementation lies in how discriminator supervision is applied during diffusion training.
The discriminator is trained to distinguish between images generated from single-source latent components and images
generated from recombined components, but these predictions are formed using a shared noisy input originating from
one source image. As a result, some recombined predictions mix latent components from different images while being
conditioned on a noise realization tied to only one of them, which is not fully aligned with a self-consistent
generative process. In practice, the discriminator operates on predicted denoised images and appears to learn a robust notion of
recombination plausibility, which empirically improves sample quality and representation structure. Nonetheless,
a more principled formulation that better aligns noise conditioning with recombined latents may further strengthen
the approach and is an interesting direction for future work. Additionally, in our current implementation, the discriminator is not explicitly conditioned on the diffusion timestep, even though predictions at different noise levels exhibit distinct feature statistics. Incorporating timestep-aware conditioning is a natural extension that may further improve stability and gradient fidelity across the diffusion schedule.

\paragraph{Future Work.} Our method enables extrapolation while remaining close to observed manifold; however, extremely novel compositions may still be suppressed. We identify three main directions of future work. Firstly, it would be interesting to extend our framework to additional domains, such as multi-modal learning or real-world robotic execution \cite{liu2025rdt1bdiffusionfoundationmodel,NEURIPS2023_a7a7180f,pmlr-v247-lin24a,anand2025meanfield}, further enhancing compositional generalization and structured exploration, or using multiple discriminators during training \cite{durugkar2017,10890126}. Secondly, it would be interesting to use a hybrid approach of unsupervised learning and feedback from environment to form novel compositions. Finally, to improve the stability of the algorithm and preventing (unlikely) potential degenerate solutions, it might be interesting to add cycle constraints \cite{10030802} to ensure that the intended components match the source latents.

\end{document}